\def\eqref#1{equation~\ref{#1}}
\def\ceil#1{\lceil #1 \rceil}
\def\1{\bm{1}}
\def\rvs{{\mathbf{s}}}
\def\rvx{{\mathbf{x}}}
\def\rvy{{\mathbf{y}}}
\def\gA{{\mathcal{A}}}
\def\gG{{\mathcal{G}}}
\def\gP{{\mathcal{P}}}
\def\gS{{\mathcal{S}}}
\def\sN{{\mathbb{N}}}
\def\Prob{{\mathbb{P}}}
\newcommand{\E}{\mathbb{E}}
\newcommand{\R}{\mathbb{R}}
\newtheorem{theorem}{Theorem}
\newtheorem{assumption}{Assumption}
\newtheorem{lemma}{Lemma}
\newtheorem*{prop*}{Proposition}
\newtheorem{definition}{Definition}
\newcommand{\D}{\mathcal{D}}
\newcommand{\Dcal}{\mathcal{D}_{cal}}
\newcommand{\Dtrain}{\mathcal{D}_{train}}
\title{Conformal Methods for Quantifying Uncertainty \\ in Spatiotemporal Data: A Survey}
\author{%
  Sophia Sun\\
 University of California, San Diego\\
  \texttt{sophiasun@eng.ucsd.edu}}
\begin{document}

\maketitle

\begin{abstract}
Machine learning methods are increasingly widely used in high-risk settings such as healthcare, transportation, and finance. In these settings, it is important that a model produces calibrated uncertainty to reflect its own confidence and avoid failures. In this paper we survey recent works on uncertainty quantification (UQ) for deep learning, in particular distribution-free Conformal Prediction method for its mathematical properties and wide applicability. We will cover the theoretical guarantees of conformal methods, introduce techniques that improve calibration and efficiency for UQ in the context of spatiotemporal data, and discuss the role of UQ in the context of safe decision making.

\end{abstract}

\section{Introduction}

Let $\mathcal{D} = (z_1, \ldots, z_n)$ be a dataset of size $n$. We denote $z_i = (x_i,y_i)$ as a sample of an input and output pair that follow the distribution $\mathcal{P}$, where $i$ is the data index. Let the input space $\mathbf{X}$ and target space $ \mathbf{Y}$ be two measurable spaces, their Cartesian product $\mathbf{Z} = \mathbf{X} \times \mathbf{Y}$ is the sample space. Consider the following problem: given a desired coverage rate $1-\alpha \in (0,1)$, we want to construct a prediction region $\Gamma^{1-\alpha}: \mathbf{X} \rightarrow \{\text{subsets of } \mathbf{Y}\}$ such that for a new data pair $(X,Y) \sim  \mathcal{P}$, we have

\begin{equation}
    \Prob_{(X,Y) \sim \gP} (Y \in \Gamma^{1-\alpha}(X)) \geq 1-\alpha
\label{eq:validity}
\end{equation}

In the case where $Y \in \Gamma^{1-\alpha}(X)$, the prediction region \textit{covers} $Y$. We say the prediction region is \textit{valid} if it satisfies Equation \ref{eq:validity}. Hence, Equation \ref{eq:validity} is also known as the \textit{validity condition} or \textit{coverage guarantee}. An example of a prediction region that we are familiar with in real life is hurricane forecasts (figure \ref{fig:cone}). In the spatiotemporal setting, uncertainty quantification can often be visualized as a "cone of uncertainty".

\begin{figure}
    \centering
    \includegraphics[width=0.7\linewidth]{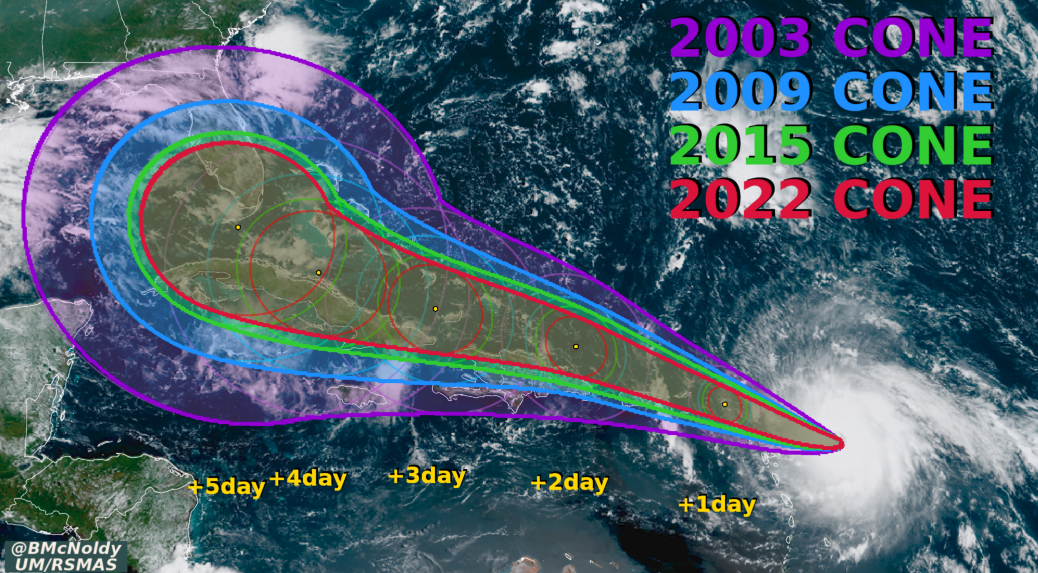}
    \caption{The "cone of uncertainty" for hurricane forecasts.  \protect\cite{molina2021striving}}
    \label{fig:cone}
\end{figure}

Quantifying the uncertainty of a model's predicted outcome is critical for risk assessment and decision making. Prediction regions allow us to bound random variables of interest and know that, with a specified probability, where the observation could fall. Consider the case of pandemic forecasts, for example: compared to a predicted number of future cases without a confidence metric, a prediction interval with $99\%$ confidence is much more useful for local health facilities, as they can safely plan for the worst-case scenario. 

The problem of building a prediction region can be approached in many ways. One may model the distribution of data via maximum likelihood or Bayesian methods, or use a \textit{distribution-free} method to capture the uncertainty - a method is \textit{distribution-free} when it creates valid prediction sets without making distributional assumptions on the data. Among them, Conformal Prediction (CP) is a straightforward algorithm to generate prediction sets for \textit{any} underlying prediction model. 

This survey will be organized as follows. We will provide a brief survey for uncertainty quantification methods in Section 2. Conformal prediction \cite{vovk2005algorithmic} allows us to satisfy the validity condition (Equation \ref{eq:validity}) with no assumptions on the common distribution $\mathcal{P}$, hence the name \textit{distribution-free} uncertainty quantification. Section 3 will introduce the algorithm and theory of conformal prediction, recent advancements for improved calibration and efficiency, and methods for applying conformal prediction to spatiotemporal data. In 4 we present methods that utilize conformal prediction for safe decision making. We will also discuss the limits of distribution-free UQ methods in section 4.2. We conclude in 5, and discuss several future directions for research on conformal algorithms and safe decision-making.

\section{Brief Overview of UQ methods}
\paragraph{Bayesian Uncertainty Quantification}
Bayesian approaches represent uncertainty by estimating a distribution over the model parameters given data, and then marginalizing these parameters to form a predictive distribution.
Bayesian neural networks (BNN) \cite{mirikitani2009recursive, mackay1992bayesian} were proposed to learn through Bayesian inference with neural networks. As modern neural networks often contain millions of parameters, the posterior over these parameters becomes highly non-convex, rendering inference intractable.
Modern approaches performs approximate Bayesian inference by Markov chain Monte Carlo (MCMC) sampling \cite{welling2011bayesian, neal2012bayesian, chen2014stochastic} or variational inference (VI) \cite{graves2011practical, kingma2013auto, kingma2015variational, blundell2015weight, louizos2017multiplicative}. 
In practice, stochastic gradient MCMC methods are prone to approximation errors, and can be to difficult to tune \cite{mandt2017stochastic}. Variational inference methods have seen strong performance on moderately sized networks, but are empirically found to be difficult to train on larger architectures \cite{he2016deep,blier2018description}.
MC-dropout methods
\cite{gal2016theoretically, gal2017concrete} view dropout at test time to be approximate variational Bayesian inference. MC dropout methods have found success in efficiently approximate the posterior \cite{kendall2017uncertainties}, but calibration of the resulted distribution has been found to be difficult \cite{alaa2020frequentist}. We refer the readers to recent surveys like \cite{abdar2021review} for the rich body of work in Bayesian uncertainty quantification.

\paragraph{Mixed / Re-calibration Methods}
In practice, Bayesian uncertainty estimates often
fail to capture the true data distribution due to intractability of Bayesian optimization \cite{lakshminarayanan2017simple, zadrozny2001obtaining}. There exists a line of research exploring re-calibration of neural network models - \cite{lakshminarayanan2017simple} proposed using ensembles of several networks for enhanced calibration, and incorporated an adversarial loss function to be used when possible as well. \cite{guo2017calibration} proposed temperature scaling, a procedure which uses a validation set to rescale the logits of deep neural network outputs for enhanced calibration. \cite{kuleshov2018accurate} and \cite{kull2019beyond} propose calibrated regression using similar rescaling techniques, the latter through stochastic weight averaging.  \cite{minderer2021revisiting}

\paragraph{Frequentist and Distribution-free Uncertainty Quantification}
Frequentist UQ methods emphasize the robustness against variations in the data. These approach either rely on resampling the data or by learning an interval bound to encompass the dataset. Among them are ensemble methods such as bootstrap \cite{efron2016, alaa2020frequentist}, model ensembles \cite{lakshminarayanan2017simple, pearce2018high,jackknife+after}; interval prediction methods include quantile regression \cite{tagasovska2019single, gasthaus2019probabilistic,takeuchi2006nonparametric}, interval regression through proper scoring rules \cite{kivaranovic2020adaptive, wu2021quantifying}; rank tests \cite{rankedtest1, rankedtest2} and permutation tests \cite{permtest}. Many of these frequentist methods satisfy the validity condition asymptotically and can be categorized as distribution-free UQ techniques as they are (1) agnostic to the model and (2) agnostic to the data distribution.

\paragraph{Conformal Prediction} 
Conformal prediction is an important example of distribution-free UQ method - a key difference between conformal methods from the ones introduced in the previous section is that it achieves the validity condition in finite samples. Originally working on studies of finite random sequences, Vladimir Vovk and Glenn Shafer coined the name and developed the core theory for online and split conformal prediction in their 2005 book \textit{Algorithmic Learning in a Random World} \cite{vovk2005algorithmic}. Conformal prediction has since become popular because of its simplicity and low computational cost properties \cite{shafer2008tutorial, angelopoulos2021gentle} and has shown promise in many realistic applications \cite{eklund2015application, angelopoulos2022image, lei2021conformal}. Current research on conformal prediction can be roughly split into three branches, shown in figure \ref{fig:venn}. The first is to develop conformal prediction algorithms for different machine learning algorithms, such as quantile regression \cite{romano2019conformalized, sesia2020comparison}, k-Nearest Neighbors \cite{papadopoulos2011regression}, density estimators \cite{izbicki2019flexible}, survival analysis \cite{teng2021t,candes2021conformalized}, risk control  \cite{bates2021distribution}, et cetera. The second is works on relaxing the exchangability assumption about data distribution so we can make guarantees in the context of, for example, distribution shifts \cite{tibshirani2019conformal, podkopaev2021distribution, barber2022conformal, hu2020distribution, gibbs2021adaptive}. The third line is improving efficiency (i.e. how sharp the prediction regions are while  validity holds) of conformal prediction \cite{messoudi2020multitarget, messoudi2021copula, romano2020classification, yang2021finite, sesia2020comparison}. The algorithms for time series that we will focus on in this survey can generally be categorized into the latter two branches.

\begin{figure}
    \centering
    \includegraphics[width=0.99\linewidth]{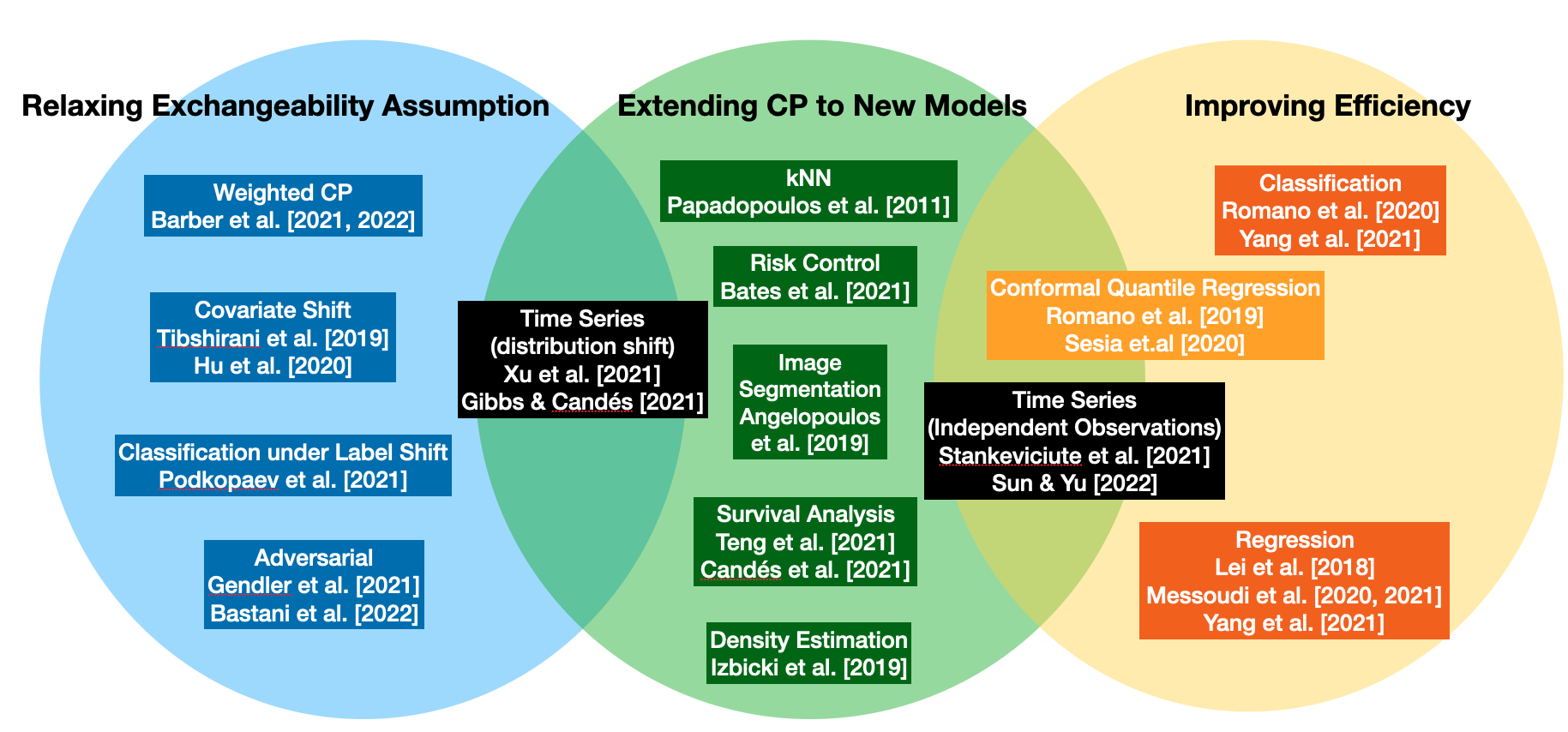}
    \caption{Venn Diagram of recent research on Conformal prediction.}
    \label{fig:venn}
\end{figure}

\begin{table*}
\centering
\begin{tabular}{ c | c cc c  } 
 \toprule
 Algorithm & Data & Distribution Assumption & Coverage Guarantee \\
 \midrule
(split) Conformal \cite{vovk2005algorithmic} & Regression & Exchangeable & $1-\alpha$ \\
EnbPI \cite{xu2021conformal} & single time series & strongly mixing errors & $\approx 1-\alpha$   \\
ACI \cite{gibbs2021adaptive} & single time series & None (online) & Asymptotically $\approx 1-\alpha$  \\
CF-RNN \cite{Schaar2021conformaltime} & independent time series & exchangeable time series & $1-\alpha$   \\ 
Copula-RNN (ours) &  independent time series & exchangeable time series &  $1-\alpha$ \\
\bottomrule
\end{tabular}
\caption{A summary of Algorithms in Sections 3.2 - 3.4. } 
\label{tab:guarantee}
\end{table*}

\section{Conformal Prediction Algorithms}

\subsection{Conformal Prediction}
This section introduces the necessary notation and definitions, and formally presents the idea of conformal prediction. 

Recall that the goal of conformal prediction is to constructing a prediction region $\Gamma^{1-\alpha}: \mathbf{X} \longrightarrow \{\text{subsets of } \mathbf{Y}\}$ given a new data pair $(X,Y) \sim  \mathcal{P}$ and a desired coverage rate $1-\alpha \in (0,1)$ such that $\Gamma^{1-\alpha}(X)$ is valid, i.e. 
\begin{equation}
    \Prob (Y \in \Gamma^{1-\alpha}(X)) \geq 1-\alpha
\label{eq:valid2}
\end{equation}

\paragraph{Validity and Efficiency} There is a trade-off between validity and efficiency, as one can always set $\Gamma^{1-\alpha}$ to be infinitely large to satisfy Equation \ref{eq:valid2}. In practice, we want to achieve that the measure of the confidence region (e.g. its area or length) to be as small as possible, given that the validity condition holds. This is known as \textit{efficiency} or \textit{sharpness} of the confidence region.

\paragraph{Training and calibration datasets} Let $\mathcal{D} = (z_1, \ldots, z_n)$ be a dataset of size $n$. The inductive conformal prediction method begins by splitting $\mathcal{D}$ into two disjoint subsets, the proper training set $\mathcal{D}_{train} = (z_1, \ldots, z_m)$ of size $m < n$ and the calibration set $\mathcal{D}_{cal}$ of size $n-m$. 

\paragraph{Nonconformity score}
An important component of conformal prediction is the nonconformity score $ \gS : \mathbf{Z}^m \times \mathbf{Z} \rightarrow \mathbb{R}$, a function to capture how well a sample $z$ \textit{conforms} to the proper training set. Informally, a high value of $\gS((x,y), \Dcal)$ indicates that the point $(x,y)$ is atypical relative to the points in $\Dcal$.
A common choice of the non-conformity score $\gS$ in a regression settings is the L2-loss on the test sample $(x,y)$:
\begin{equation}
   \gS((x,y), \Dcal) := |y - \hat{f}(x)|
\end{equation}

where $\hat{f}: \mathbf{X}\longrightarrow \mathbf{Y}$ is a prediction model trained with the proper training set $\mathcal{D}_{train}$. In the following sections, we refer to the non-conformity score of a data point $z$ given a prediction model $\hat{f}$ as $\gS(z, \hat{f})$ for simplicity.

\paragraph{Algorithm} 
The conformal prediction algorithm consists of two parts, the training and the calibration process. During the training process, we fit a machine learning model indicated by $\hat{f}$ to the training data set $\Dtrain$. The validity guarantee of the conformal algorithm holds regardless of the underlying model $\hat{f}$. In the calibration process, we calculate the nonconformity score $s_i = \gS(z_i, \hat{f})$ for each $z_i \in \Dcal$. We can then create the confidence region based on the nonconformity scores. We outline the full algorithm in Algorithm \ref{alg:cp}.

\begin{algorithm}[t!]
    \SetKwInput{Input}{Input}
    \SetKwInput{Output}{Output}
    \Input{Dataset $\D = {(x_i, y_i)}_{i = 1, \ldots, n}$, regression algorithm $\gA$, test point $x_{n+1}$, target confidence level $1-\alpha$}
    \Output{Prediction region $\Gamma^{1-\alpha}(x_{n+1})$.}
    \hrulefill\\
    Randomly split dataset $\D$ into training and calibration datasets $\D = \Dtrain \cup \Dcal$ where $|\Dtrain| = m$ and $|\Dcal| = n-m$;\\
    Train machine learning model $\hat{f} = \gA(\Dtrain)$;\\
    Initialize nonconformity scores $\mathbf{s}_{cal} = \{\}$;\\ 
    \For{$(x_i, y_i) \in \mathcal{D}_{cal}$}{
        $\mathbf{s}_{cal} \leftarrow \mathbf{s} \cup \{ \gS((x_i, y_i), \hat{f}) \}$;
    }
   Calculate the $(1-\alpha)$-th quantile  $q_{1-\alpha}$ of $\mathbf{s}_{cal} \cup \{\infty \}$; \\
    \KwRet{$\Gamma^{1-\alpha}(x_{n+1}) = \{y: \gS( (x_{n+1},y), \hat{f}) \leq q_{1-\alpha} \}$}
 \caption{Conformal Prediction}
 \label{alg:cp}
\end{algorithm}

\paragraph{Theoretical Guarantees} We will prove that the prediction regions generated by Algorithm \ref{alg:cp} is \textit{valid} (Equation \ref{eq:validity} )if the dataset $\D \sim \gP$ is \textit{exchangeable} (assumption \ref{as:exchange}).

\begin{assumption} [exchangeability]
Assume that the data points ${z_i} \in \mathcal{D}$ and the test point $z_{n+1}$ are exchangeable. Formally, $z_i$ are exchangeable if arbitrary permutation leads to the same distribution, that is,
\[
(z_1, \ldots, z_{n+1} ) \stackrel{d}{=}  (z_{\pi(1)}, \ldots, z_{\pi(n+1)} )
\]
with arbitrary permutation $\pi$ over $\{1, \ldots, n+1\}$
\label{as:exchange}
\end{assumption}

Define the empirical $p$-quantile of a set of nonconformity scores $s_{1:n} = \{s_1, \ldots, s_n\}$ as:
\begin{equation}
    Q(p, s_{1:n}) := \inf\{s': (\frac{1}{n} \sum_{i=1}^n \mathbbm{1}_{s_i \leq s'}) \geq p \}
    \label{eq:quantile}
\end{equation}

\begin{lemma} [Lemma 1 in \cite{tibshirani2019conformal}]
If $s_1 \ldots s_{n+1}$ are exchangeable random variables, then for any $p\in(0,1)$, we have
\[
\Prob(s_{n+1} \leq Q(p, s_{1:n}\cup \{\infty\}) ) \geq p
\]
Furthermore, if ties within $s_{1:n}$ occur with probability zero, then the above probability is upper bounded by $p+1/(n+1)$.
\label{lm:quantile}
\end{lemma}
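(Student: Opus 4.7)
The plan is to translate the quantile inequality into a statement about the rank of $s_{n+1}$ in the augmented sample, and then to exploit exchangeability to conclude that this rank is (almost) uniform on $\{1, \ldots, n+1\}$.

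First I would rewrite $Q(p, s_{1:n}\cup\{\infty\})$ as an order statistic of the augmented sample $s_{1:n+1}$. Let $k := \lceil p(n+1)\rceil$. If $k \le n$, then the definition in Equation~\ref{eq:quantile} gives $Q(p, s_{1:n}\cup\{\infty\})=s_{(k)}^{\mathrm{aug}}$, where $s_{(k)}^{\mathrm{aug}}$ denotes the $k$-th order statistic of $s_{1:n}\cup\{\infty\}$; if $k=n+1$, the quantile is $\infty$ and the bound holds trivially, so the main case is $k\le n$. The key observation is that swapping $s_{n+1}$ with any entry of $s_{1:n}$ does not affect the sorted sequence, so the event $\{s_{n+1}\le Q(p, s_{1:n}\cup\{\infty\})\}$ can be rephrased as $\{\mathrm{rank}(s_{n+1}; s_{1:n+1}) \le k\}$, where the rank uses any fixed tie-breaking rule (e.g.\ breaking ties by index, noting that $\infty$ sits strictly above every $s_i$).

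Next I would use exchangeability. Since $(s_1,\ldots,s_{n+1}) \stackrel{d}{=} (s_{\pi(1)},\ldots,s_{\pi(n+1)})$ for every permutation $\pi$, a symmetric tie-breaking rule makes the rank of $s_{n+1}$ uniformly distributed on $\{1,\ldots,n+1\}$. Hence
\begin{equation*}
\Prob\bigl(\mathrm{rank}(s_{n+1}; s_{1:n+1}) \le k\bigr) \;=\; \frac{k}{n+1} \;=\; \frac{\lceil p(n+1)\rceil}{n+1} \;\ge\; p,
\end{equation*}
which is exactly the desired lower bound. For the upper bound under the no-ties assumption, the rank is genuinely uniform (no tie-breaking is needed), and the same equality yields
\begin{equation*}
\Prob\bigl(s_{n+1}\le Q(p, s_{1:n}\cup\{\infty\})\bigr) \;=\; \frac{\lceil p(n+1)\rceil}{n+1} \;\le\; \frac{p(n+1)+1}{n+1} \;=\; p+\frac{1}{n+1}.
\end{equation*}

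The main obstacle is the bookkeeping around ties and around the additional $\infty$: I have to check that appending $\infty$ (rather than an arbitrary inflation of the sample) is what makes the identification of $Q(p, s_{1:n}\cup\{\infty\})$ with the $k$-th order statistic of the \emph{full} augmented sequence $s_{1:n+1}$ valid regardless of how $s_{n+1}$ compares to $s_{1:n}$, and that under ties one can always choose a symmetric tie-breaking rule so that exchangeability still yields a uniform rank. Once those two points are settled, the probability calculation is immediate.
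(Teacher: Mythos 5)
Your proposal is correct and follows essentially the same route as the paper: both arguments identify the event $\{s_{n+1}\le Q(p, s_{1:n}\cup\{\infty\})\}$ with the event that $s_{n+1}$ ranks among the $\lceil p(n+1)\rceil$ smallest of the augmented sample $s_{1:n+1}$, and then invoke exchangeability to get a uniform rank, yielding $\lceil p(n+1)\rceil/(n+1)\ge p$ and, absent ties, the upper bound $p+1/(n+1)$. Your extra bookkeeping (the trivial case $\lceil p(n+1)\rceil=n+1$ and the symmetric tie-breaking) only makes explicit what the paper leaves implicit.
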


\begin{proof}
 Let $q = Q(p, s_{1:n}\cup \{\infty\})$. Note that if $s_{n+1} > q$, then $Q(p, s_{1:n}\cup \{\infty\}) = Q(p, s_{1:n}\cup \{s_{n+1}\})$, the quantile remains unchanged. This fact can be written as:

\[
s_{n+1} > Q(p, s_{1:n}\cup \{\infty\}) \Longleftrightarrow s_{n+1} > Q(p, s_{1:n+1}) 
\]

and equivalently 

\[
s_{n+1} \leq Q(p, s_{1:n}\cup \{\infty\}) \Longleftrightarrow s_{n+1} \leq Q(p, s_{1:n+1}) 
\]

The above describes the condition where $s_{n+1}$ is among the $\ceil{p(n+1)}$ smallest of $s_{1:n}$. By exchangability, the probability of $s_{n+1}$'s rank among $s_{1:n}$ is uniform. Therefore,

\[
\Prob (s_{n+1} \leq Q(p, s_{1:n+1})) = \Prob(s_{n+1} \leq Q(p, s_{1:n}\cup \{\infty\})  =  \frac{\ceil{p(n+1)}}{(n+1)} \geq p
\]

Which proves the lower bound. When there are no ties, we have $\frac{\ceil{p(n+1)}}{(n+1)} \leq p + \frac{1}{(n+1)}$ which proves the upper bound.
\end{proof}

\begin{theorem}[Validity of the conformal prediction algorithm\cite{vovk2005algorithmic,lei2018distribution}]

Let $\D \sim \gP$ be an exchangeable dataset. Then for any score function $\gS$ and any significance level $\alpha \in (0,1)$, the prediction set given by algorithm \ref{alg:cp} 
\[
\Gamma^{1-\alpha}(x_{n+1}) = \{y: \gS( (x_{n+1},y), \hat{f}) \leq Q(1-\alpha, \mathbf{s}_{cal}) \}
\]

satisfies the validity condition, i.e.
\[
\Prob(y_{x+1} \in \Gamma^{1-\alpha}(x_{n+1}) ) \geq 1-\alpha
\]
\label{the:validity}
\end{theorem}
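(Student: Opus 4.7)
The plan is to translate the event $y_{n+1} \in \Gamma^{1-\alpha}(x_{n+1})$ into an event about the nonconformity score of the test point and then invoke Lemma \ref{lm:quantile} directly. First I would unpack the definition of the prediction region: by construction,
\[
y_{n+1} \in \Gamma^{1-\alpha}(x_{n+1}) \iff \gS((x_{n+1},y_{n+1}),\hat{f}) \leq Q(1-\alpha, \mathbf{s}_{cal}\cup\{\infty\}).
\]
Writing $s_{n+1} := \gS((x_{n+1},y_{n+1}),\hat{f})$ and letting $s_1,\ldots,s_{n-m}$ denote the scores in $\mathbf{s}_{cal}$, the target probability becomes $\Prob(s_{n+1} \leq Q(1-\alpha, s_{1:n-m}\cup\{\infty\}))$, which is exactly the quantity bounded in Lemma \ref{lm:quantile} with $p=1-\alpha$.

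The second step, and the genuine content of the proof, is to verify that the scores $s_1,\ldots,s_{n-m},s_{n+1}$ are themselves exchangeable so that Lemma \ref{lm:quantile} applies. The cleanest route is to condition on $\Dtrain$: once $\Dtrain$ is fixed, the fitted predictor $\hat{f}$ is a deterministic object, and each score $s_i$ is obtained by applying the \emph{same fixed measurable function} $z \mapsto \gS(z,\hat{f})$ to $z_i$. Because Assumption \ref{as:exchange} gives exchangeability of the original sequence $(z_1,\ldots,z_n,z_{n+1})$, conditioning on the unordered training set preserves exchangeability of the remaining calibration points together with the test point $z_{n+1}$, and applying a common function preserves exchangeability. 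Hence the scores satisfy the hypothesis of Lemma \ref{lm:quantile}.

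With exchangeability established, the final step is a one-line application of Lemma \ref{lm:quantile}, yielding
\[
\Prob\bigl(s_{n+1} \leq Q(1-\alpha, s_{1:n-m}\cup\{\infty\})\bigr) \geq 1-\alpha,
\]
which is exactly the claimed validity bound. Taking expectation over $\Dtrain$ removes the conditioning without changing the inequality.

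I expect the main obstacle to be the exchangeability step rather than the quantile manipulation: one must argue carefully that splitting $\mathcal{D}$ into $\Dtrain$ and $\Dcal$ and fitting $\hat{f}$ on $\Dtrain$ does not break exchangeability between $\Dcal$ and the test point. The subtlety is that $\hat{f}$ depends on the data, so the scores are not independent; the key insight is that by conditioning on $\Dtrain$, the scoring function becomes a fixed function applied to an exchangeable subfamily, which is the minimal condition needed to invoke Lemma \ref{lm:quantile}. Everything else is bookkeeping tied to the definition of the region and the choice $p = 1-\alpha$.
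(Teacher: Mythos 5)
Your proposal is correct and follows essentially the same route as the paper's proof: translate the coverage event into the event $s_{n+1} \leq Q(1-\alpha, \mathbf{s}_{cal}\cup\{\infty\})$ and invoke Lemma \ref{lm:quantile} with $p = 1-\alpha$. The only difference is that you justify exchangeability of the scores more carefully (by conditioning on $\Dtrain$ so that $\hat{f}$ becomes a fixed function, then averaging over $\Dtrain$), whereas the paper asserts this step in a single line; your added care is welcome but does not change the argument.
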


\begin{proof}
Note that if $\D$ is exchangeable, so is $\Dcal$, and hence so is the set of nonconformity scores $\mathbf{s}_{cal}$. Therefore, by lemma \ref{lm:quantile} we have:
\[
\Prob(y_{x+1} \in \Gamma^{1-\alpha}(x_{n+1}) ) = \Prob \left( \gS( (x_{n+1},y), \hat{f}) \leq Q(1-\alpha, \mathbf{s}_{cal} \cup \{\infty\}) \right) \geq 1-\alpha
\]
\end{proof}

It is easy to prove that the upper bound in lemma \ref{lm:quantile} also applies for the prediction set in theorem \ref{the:validity}. More precisely, if ties between the nonconformity scores $\mathbf{s}_{cal}$ occur with probability zero, then 
\[
1-\alpha \leq \Prob(y_{x+1} \in \Gamma^{1-\alpha}(x_{n+1}) ) \leq 1-\alpha + 1/(n+1)
\]

\paragraph{Remark on exchangability} Exchangeability is a weaker assumption on data than the i.i.d. assumption, hence it is reasonable for many application scenarios. In later sections of this survey we will explore conformal algorithms that relaxes the exchangability assumption and adaptive to data distribution shifts in an online fashion.

\paragraph{Split vs. Full conformal regression} The procedures described above and in Algorithm \ref{alg:cp} is generally known as \textit{inductive} conformal prediction or \textit{split} conformal prediction. It is developed on the basis of the \textit{full} conformal prediction, where we don't split the training set into proper training and calibration sets, but performs $n$ folds of hold-one-out training, and use the held-out sample for calibration. The full conformal prediction method requires training the model $n$ times, and hence is unsuitable for settings where model training is computationally expensive, such as deep learning. The scope of this survey is limited to the regression setting, but conformal prediction can be used for many  machine learning tasks, including and not limited to classification, image segmentation and outlier detection. We refer readers to  \cite{vovk2005algorithmic, angelopoulos2021gentle} for a more thorough introduction. 

\begin{figure}[!ht]
    \centering
    \includegraphics[width=0.4\linewidth]{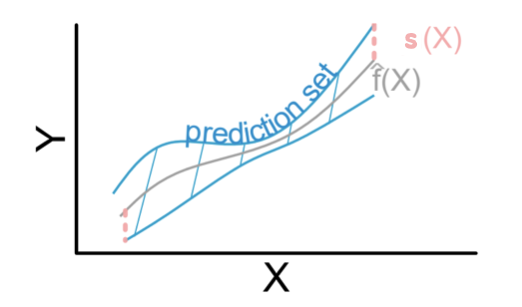}
    \caption{A visualization of the conformal prediction interval in a 1-D regression setting from \protect\cite{angelopoulos2021gentle} }
    \label{fig:interval}
\end{figure}

\paragraph{Adaptive prediction sets} One may notice that if we choose nonconformity score $\gS((x,y),\hat{f}) = |y-\hat{f}(y)|$ then we will end up with the same size of prediction set for any input $x$ following the conformal prediction algorithm \ref{alg:cp}. From an UQ perspective, this is undesirable - we want the procedure to return larger sets for harder inputs and smaller sets for easier inputs. Adaptability is an active line of research within the conformal prediction community, with approaches including, but not limited to:

\begin{itemize}
    \item Normalized nonconformity scores \cite{lei2018distribution}. The most common method for adaptation is to scale an non-adaptive score function with a heuristic function $u(x)$ that capture the uncertainty of the model. That is, $u(x)$ is large when model is uncertain and small otherwise. Define the score function as 
    \[
    \gS((x,y),\hat{f}) = \frac{|y-\hat{f}(x)|}{u(x)}
    \]
     some common choices of $u(x)$ \cite{angelopoulos2021gentle, svensson2018conformal} are 
     \begin{itemize}
         \item Standard deviation to the center of training data mean. $u(x):= \hat{\sigma}(x)$
         \item Distance to training center (in Euclidean space or learned latent space such as PCA)
         \item Nearest neighbour. Uncertainty was derived based on the average distance to the $k-$nearest neighbors learned latent space such as PCA or t-SNE.
         \item Error prediction model model \cite{norinder2014introducing,cortes2016improved} where a machine learning model was trained to predict the expected error given inputs. 
     \end{itemize}
     However, the estimated heuristic $u(x)$ can be (1) unstable and (2) lack direct relation to the quantiles $\alpha$ of the target space distribution. 
    \item Feature-space conformal regression  \cite{bates2021distribution,angelopoulos2021gentle}, where the conformal prediction algorithm is performed on the feature space and then projected to the output space. In the deep learning setting, the feature space can be latent spaces before the final layer(s) for the neural network. \cite{featurecp} proved that validity in the feature space implies validity in the output space, given some smoothness assumption on the prediction layer. 
    \item Conformalized quantile regression (CQR) \cite{steinwart2011estimating, romano2019conformalized}. CRQ allows for varying lengths over due to the learned quantile functions, and allows for asymmetric prediction intervals and a because of the separately fitted upper and lower bounds. A quantile regression algorithm \cite{takeuchi2006nonparametric} attempts to learn the $\gamma$ quantile of $Y_{test}|X_{test} = x$ for each possible value of $x$. We denote the fitted model as $\hat{t}_\gamma(x)$ - by definition, the probability of  $Y_{test}|X_{test} = x$ lies below $\hat{t}_{0.05}(x)$ is $5\%$ and below $\hat{t}_{0.95}(x)$ is $95\%$, hence the interval $[\hat{t}_{0.05}(x), \hat{t}_{0.95}(x)]$ to have $90\%$ coverage. With this intuition, we define the nonconformity score to be the distance from $y$ to the predicted quantile interval, given a desired coverage rate of $1-\alpha$:
    \[
    \gS(x,y) = \max\{ \hat{t}_{\alpha/2}(x)-y, y - \hat{t}_{1-\alpha/2}(x)\} 
    \]
   Figure \ref{fig:quantile} illustrates the result of different adaptive techniques with a synthetic dataset. 
\end{itemize}

\begin{figure}
        \centering
        \includegraphics[width=0.95\linewidth]{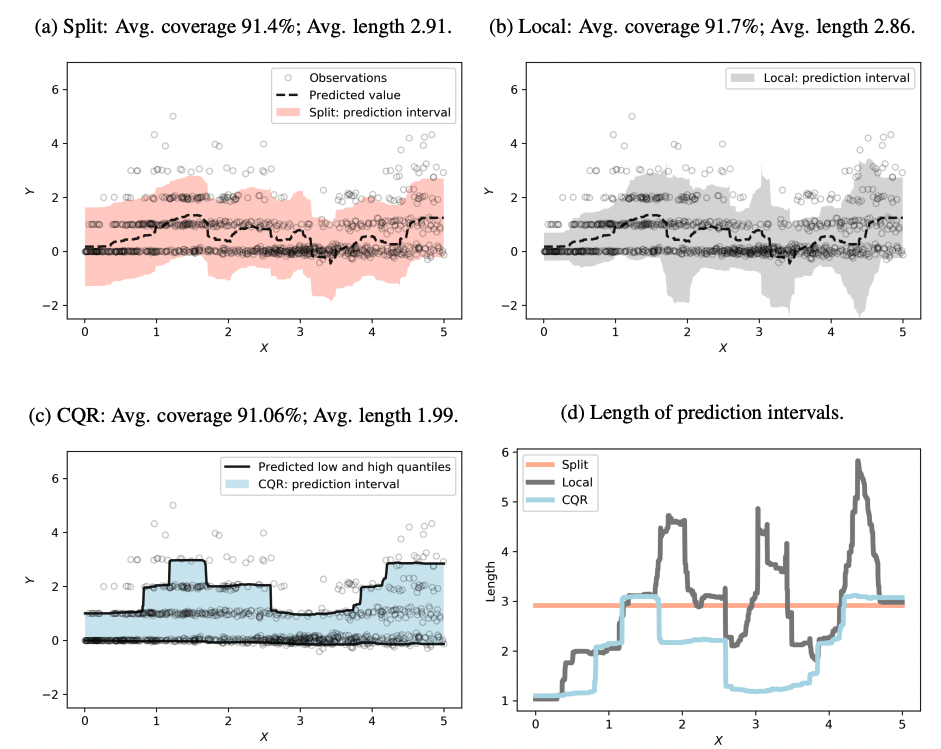}
        \caption{Prediction interval on simulated data, where noise is heteroscedastic and with outliers from \protect\cite{romano2019conformalized}. (a) standard split conformal prediction (algorithm \ref{alg:cp}), (b) normalized conformity score with $u(x) = \hat{\sigma}(x)$ is the mean absolute deviation (MAD) of $|Y - \hat{f}(x)|$ given $X = x$, and (c) Conformalized Quantile Regression (CQR). Observe that while all three methods achieve the $\geq 90\%$ coverage guarantee, the intervals generated by CQR are much more efficient.}
        \label{fig:quantile}
    \end{figure}

The adaptability techniques introduced above are developed for the exchangeable setting. While some methods are modular to the specific conformal algorithm (such as choosing a scaled nonconformity score), further research is needed for adaptability in the time series setting. Formalization of the adaptive property and its implications are further discussed in section 4.2.


\subsection{Conformal prediction for time series under distribution shift}

In this section, we will look at recent developments of extending conformal prediction's validity guarantees to the time series setting. This is a nontrivial task, as real-world time series data often are non-exchangeable. Distribution shifts can happen, for example, in financial markets where market behavior can dramatically change because of a new legislation or policy, and in epidemiological data when there are seasonal and infrastructural shifts. 


\subsubsection{Sequential Distribution-free Ensemble Batch Prediction Intervals (EnbPI \cite{xu2021conformal})}  
The problem setting is we have a unknown model $f: \R^d \rightarrow \R$ where $d$ is the dimension of the feature vector, the data pairs $(x,y)$ are generated in a time-series fashion

\[
Y_t = f(X_t) + \epsilon_t, \quad t= 1,2,\ldots
\]

where the noise term $\epsilon_t$ are identically distributed to a common cumulative distribution function (CDF) $F$, but doesn't have to be independent.. Xu et al. tackles the nonexchangeable problem by (1) training an ensemble of leave-one-out predictors, and (2) deploying a sliding window for calibration and re-calibrates as the time series continues. The algorithm is presented in Algorithm \ref{alg:enbpi}. The ensemble training (line 3-13 in Algorithm \ref{alg:enbpi}) is similar to a bootstrap method \cite{jackknife+after}. Line 19-24 in Algorithm \ref{alg:enbpi} shows the procedure of leveraging recent data for calibration. It allows the prediction intervals to adapt to distributional shifts without having to retrain $\hat{f}$.  The authors show that, under the assumption of the error terms $\{\epsilon_t\}_{t\geq 1})$ are strongly mixing and stationary, the proposed algorithm achieves a coverage approximate validity guarantee. (see Theorem 1 of \cite{xu2021conformal}). 

\begin{algorithm}
    \SetKwInput{Input}{Input}
    \SetKwInput{Output}{Output}
    \Input{Dataset $\D = {(x_i, y_i)}_{i = 1}^T$, regression algorithm $\gA$, target confidence level $1-\alpha$, aggregation function $\phi$, number of bootstrap models $B$, prediction horizon $h$, test data $\D = {(x_t, y_t)}_{t = T+1}^{T+ T_1}$}
    \Output{Ensemble prediction intervals $\{\Gamma^{\phi, 1-\alpha}_{T,t}(x_t)\}_{t = T+1}^{T+ T_1}$.}
    \hrulefill\\
    \tcp{Train bootstrap predictors}
    \For{$b=1,\ldots,B$}
    {
    Sample with replacement an index set $S_b = (i_1,\ldots, i_T)$ from indices $(1,\ldots, T)$;\\
    Compute $\hat{f}^b = \gA( \{(x_i, y_i)|i\in S_b \} )$
    }   
    
    \tcp{Calculate nonconformity scores}
    Initialize $\mathbf{s} = \{\}$; \\
    \For{$i=1,\ldots, T$}
    {
    $\hat{f}^\phi_{-i} (x_i) = \phi ( \{\hat{f}^\phi(x_i) | x \notin S_b \} ) $; \\
    Compute $\hat{s}_i^\phi = |y_i - \hat{f}^\phi_{-i}(x_i) |$;\\
    $\mathbf{s} = \mathbf{s} \cup \{\hat{s}_i^\phi\} $;\\
    }
    
    \tcp{Prediction}
    \For{$t = T+1, \ldots, T+ T_1$}
    {
    Let $\hat{f}^\phi_{-i} (x_t) = (1-\alpha) \text{ quantile of } \{ \hat{f}^\phi_{-i} (x_i)\}_{i=1}^T$; \\
    Let $w_t^\phi = (1-\alpha) \text{ quantile of } \mathbf{s}$; \\
    \KwRet{$\Gamma^{\phi, 1-\alpha}_{T,t}(x_t) = [\hat{f}^\phi_{-i} \pm  w_t^\phi] $};\\

    \tcp{Re-calibration using recent nonconformity scores}

    \If{$t-T = 0 \mod h$}
    { \For{$j=t-h, \ldots, t-1$}
        {
        Compute $\hat{s}_j^\phi = |y_j - \hat{f}^\phi_{-j}(x_t) |$;\\
        $\mathbf{s} = (\mathbf{s} - \{\hat{s}_1^\phi\}) \cup \{\hat{s}_j^\phi\} $ and reset index of $\mathbf{s}$
        }
    }
    }
 \caption{Sequential Distribution-free Ensemble Batch
Prediction Intervals (EnbPI)}
 \label{alg:enbpi}
\end{algorithm}


\subsubsection{Adaptive Conformal Inference Under Distribution Shift (ACI \cite{gibbs2021adaptive})}  
The EnbPI algorithm adapts to the inherent distribution change in time series through a fixed-window re-calibration process. One naturally wonders if there are more principled ways to perform online re-calibration.
Gibbs et al.'s approach to a shifting underlying data distribution, Adaptive conformal inference, is to re-estimate the nonconformity score function $S(\cdot)$ quantile function $\hat{Q}(\cdot)$ to align with the most recent observations: one can assume at each time $t$ we are given a fitted score function $S_t(\cdot)$ and corresponding quantile function $\hat{Q}_t(\cdot)$. We define the miscoverage rate of the prediction set $\Gamma_t^{1-\alpha}(X_t):= \{y: S_t(X_t,y) \leq \hat{Q}_T(1-\alpha) \}$ to be 
\[
M_t(\alpha) := \Prob(S_t(X_t, Y_t) > \hat{Q}_t (1-\alpha))
\]
Note that $M_t(\alpha) < \alpha$ is equivalent to the validity condition in Equation \ref{eq:validity}. In other words, we want to keep the miscoverage rate of a $\alpha$ prediction region less than $\alpha$. We assume the quantile function $ \hat{Q}_t (1-\alpha)$ is continuous, non-decreasing such that $\hat{Q}_t(0) = -\infty$ and $\hat{Q}_t(1) = \infty$. As a result, $M_t(\cdot)$ will be non-decreasing on $[0,1]$ with $M_t(0) = 0$ and $M_t(1) = 1$. Then there exists a $\alpha^*_t$ defined as
\[
\alpha^*_t := \sup \{\beta \in [0,1]: M_t(\beta) \leq \alpha \}
\]
This means if we correctly calibrate the argument to $\hat{Q}_t (\cdot)$ we can achieve the coverage guarantee. \cite{gibbs2021adaptive} performs the calibration with an online update - we look at the miscoverage rate on our calibration set and increase $\alpha^*_t$ if the algorithm currently over-covers, and decrease it if $Y_t$ is under-covered. 
\begin{equation}
    \alpha_{t+1} := \alpha_t + \gamma(\alpha-\text{err}_t)
\label{eq:aci}
\end{equation}

where 
\[
\text{err}_t :=  \begin{cases}
			1, & \text{if } Y_t \notin \hat{\Gamma}_t(\alpha_t), \; \hat{\Gamma}_t(\alpha_t) = \{y:\mathbf{s}_t(X,y) \leq \hat{Q}_t(1-\alpha) \}  \\
            0, & \text{otherwise}
		 \end{cases}
\]
and the step size $\gamma$ is a hyper-parameter that is a trade-off between adaptability and stability. We essentially re-calibrate our conformal predictor by learning the $\alpha^*_t$ online. Because Equation \ref{eq:aci} is a contraction, we have the following coverage result on ACI.

\begin{lemma}[Validity of ACI (Proposition 4.1 in \cite{gibbs2021adaptive})]
With probability one we have that for all $T \in \sN$
\[
\left| \frac{1}{T}\sum^{T}_{t=1} \text{err}_t - \alpha \right| \leq \frac{\max\{\alpha_1, 1-\alpha_1\} + \gamma}{\gamma T}
\]
In particular, asymptotically we have
\[
\lim_{T\rightarrow\infty}\frac{1}{T}\sum_{t=1}^T\text{err}_t = \alpha
\]
\end{lemma}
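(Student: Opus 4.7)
The plan is to exploit the linear structure of the adaptive update $\alpha_{t+1} = \alpha_t + \gamma(\alpha - \text{err}_t)$. Telescoping this identity from $t=1$ to $T$ gives
\[
\alpha_{T+1} - \alpha_1 = \gamma T \alpha - \gamma \sum_{t=1}^T \text{err}_t,
\]
and dividing by $\gamma T$ and rearranging yields
\[
\frac{1}{T}\sum_{t=1}^T \text{err}_t - \alpha = -\frac{\alpha_{T+1} - \alpha_1}{\gamma T}.
\]
The problem therefore reduces entirely to showing $|\alpha_{T+1} - \alpha_1| \leq \max\{\alpha_1, 1-\alpha_1\} + \gamma$ uniformly in $T$.

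The key lemma I would establish by induction on $t$ is the invariant $\alpha_t \in [-\gamma, 1+\gamma]$ for all $t\geq 1$, taking $\alpha_1 \in [0,1]$ as the base case. The crucial observation is that whenever $\alpha_t$ leaves $[0,1]$ the error becomes deterministic and pulls $\alpha_t$ back: if $\alpha_t > 1$ then $1-\alpha_t < 0$, so $\hat{Q}_t(1-\alpha_t) = -\infty$ by the extension given in the excerpt, the prediction set $\hat{\Gamma}_t(\alpha_t)$ is empty, $\text{err}_t = 1$, and hence $\alpha_{t+1} = \alpha_t - \gamma(1-\alpha) < \alpha_t$; symmetrically, if $\alpha_t < 0$ then $\hat{Q}_t(1-\alpha_t) = \infty$, the prediction set equals all of $\mathbf{Y}$, $\text{err}_t = 0$, and $\alpha_{t+1} = \alpha_t + \gamma\alpha > \alpha_t$. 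A short case analysis over the three regimes $\alpha_t \in [-\gamma,0)$, $[0,1]$, and $(1,1+\gamma]$ then confirms that the update never ejects $\alpha_t$ from $[-\gamma, 1+\gamma]$, since within $[0,1]$ a single step changes $\alpha_t$ by at most $\gamma$ in either direction.

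Given this invariant, the maximum excursion from $\alpha_1$ is attained at the endpoints:
\[
|\alpha_{T+1} - \alpha_1| \leq \max\{\alpha_1 - (-\gamma),\ (1+\gamma) - \alpha_1\} = \max\{\alpha_1, 1-\alpha_1\} + \gamma.
\]
Substituting back into the telescoped identity proves the nonasymptotic bound, and letting $T\to\infty$ forces the long-run miscoverage rate to equal $\alpha$. Note that the bound is pathwise deterministic once the invariant is established, so the almost-sure statement is automatic. The main obstacle is really the invariant itself: one has to verify that the boundary behavior of $\hat{Q}_t$ at $0$ and $1$ genuinely traps the iterates, rather than invoking any a priori bound on $\alpha_t$. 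Everything downstream is algebra.
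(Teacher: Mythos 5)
The paper does not actually prove this lemma---it is stated as a citation of Proposition 4.1 of Gibbs and Cand\`es---so there is no in-paper argument to compare against; your proof is correct and is essentially the argument from that reference: telescope the update $\alpha_{t+1}=\alpha_t+\gamma(\alpha-\text{err}_t)$ to reduce the bound to controlling $|\alpha_{T+1}-\alpha_1|$, then establish the invariant $\alpha_t\in[-\gamma,1+\gamma]$ by noting that $\text{err}_t$ becomes deterministic once $\alpha_t$ exits $[0,1]$ (empty prediction set when $\alpha_t>1$, the whole space when $\alpha_t<0$) and that a single step moves $\alpha_t$ by at most $\gamma$. The only detail worth making explicit is that the stated boundary convention $\hat{Q}_t(0)=-\infty$, $\hat{Q}_t(1)=\infty$ must be understood as extending (by monotonicity, or clipping) to arguments $1-\alpha_t$ outside $[0,1]$, which is precisely the convention the original proof uses; with that reading, your case analysis and the final bound $|\alpha_{T+1}-\alpha_1|\leq \max\{\alpha_1,1-\alpha_1\}+\gamma$ go through, and the almost-sure and asymptotic statements follow as you say.
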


\begin{figure}
    \centering
    \includegraphics[width=0.8\linewidth]{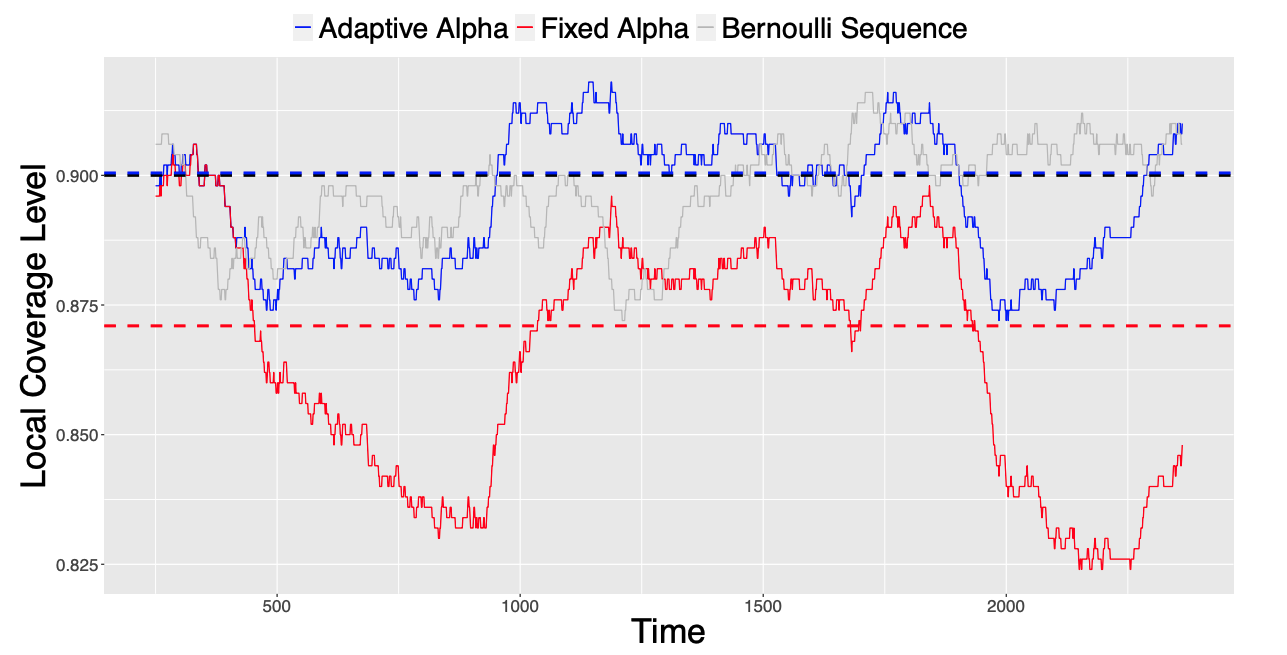}
    \caption{An illustration of the ACI algorithm in action on county level election data, figure from \protect\cite{gibbs2021adaptive}. Local coverage frequencies of adaptive conformal (blue), vanilla conformal (red), and an i.i.d. Bernoulli(0.1) sequence (grey) for reference of stochasticity. Colored dotted lines show the average coverage across all time points, while the black line indicates the target coverage level of $1-\alpha = 0.9$. }
    \label{fig:aci}
\end{figure}



\subsection{Conformal algorithms for multiple exchangeable time series}
The algorithms covered above are designed to learn from a single time series where distribution shifts are present. In this section, we study the setting where datasets contain many time series whose shared patterns can be potentially exploited for improved prediction. This setting is commonly seen in medical data, traffic, and recommender system settings where research in probabilistic forecasting has attracted much interest \cite{salinas2020deepar}. Figure \ref{fig:paradigm} illustrates how the two data paradigms differ. 

\begin{figure}[ht!]
    \centering
    \includegraphics[width=0.8\linewidth]{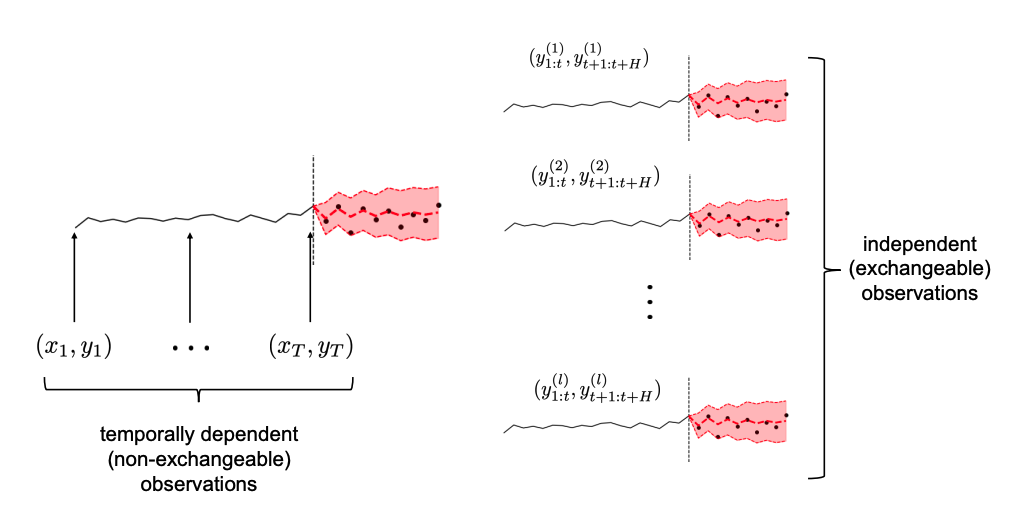}
    \caption{Figure from \protect\cite{Schaar2021conformaltime} explaining two different paradigms of time series data. The left is when data is comprised of a single time series, whose observations are individual time-steps within the time series. The right side is an example of a data set of multiple time series where each time series are treated as an observation. Independence of time series means they are exchangeable.}
    \label{fig:paradigm}
\end{figure}

The formal setup for multi-horizon time series is as follows. We denote the time series dataset as $\mathcal{D} = \{ (\mathbf{x}_{1:t}^{(i)}, \mathbf{y}_{t+1:t+k}^{(i)} \}_{i=1}^n$, where $\mathbf{x}_{1:t} \in \mathbf{X}^t$ is $t$ timesteps of input, $\mathbf{y}_{t+1:t+k}  \in \mathbf{Y}^k$ is $k$ timesteps of output. In the traditional time series forecasting setting we have $\mathbf{X} = \mathbf{Y}$, but they do not have to be equal. Given dataset $\mathcal{D}$, a test sample $\mathbf{x}_{1:t}^{(n+1)}$, and a confidence level $1- \alpha$, we want a set of $k$ prediction intervals, $[\Gamma^{1-\alpha}_1, \ldots, \Gamma^{1-\alpha}_k]$, one for each timestep, such that:

\begin{equation}
    \Prob[\: \forall h \in \{1,\ldots, k\}, \: \mathbf{y}_{t+h} \in \Gamma^{1-\alpha}_h\: ] \geq 1-\alpha
\label{eq:ts-validity}
\end{equation}

\subsubsection{Conformal Time-series forecasting} (CF-RNN \cite{Schaar2021conformaltime}) 
The straightforward way to apply conformal prediction to this form of time series is to treat each $\rvy_h$ time step as a separate prediction task. \cite{Schaar2021conformaltime} did exactly so. To ensure that the joint probability over $k$ prediction time steps satisfies the coverage guarantee in equation \ref{eq:ts-validity}, they apply Bonferroni correction \cite{bonferroni1936teoria} when calibrating for each time step. That is, given the confidence level $1 - \alpha$, instead of setting the critical nonconformity score as the $(1-\alpha)$-th quantile of the calibration nonconformity scores as in algorithm \ref{alg:cp}, we chose the  $(1-\alpha/k)$-th quantile. 

Formally, let subscript $h \in [1,\ldots, k]$ be the timestep index of variables - $\rvs_h$ denote the set of calibration-set nonconformity score at prediction horizon $h$, and $\hat{f}(\rvx)_h$ denote the prediction by learned model $\hat{f}$ given input $\rvx$ at prediction horizon $h$. \cite{Schaar2021conformaltime} constructs the prediction sets as:
\begin{align}
\hat{s}_h &:=  Q(1-\alpha/k, \mathbf{s}_h), \quad h\in [1,\ldots, k]\\
\Gamma^{1-\alpha}_h(\rvx) &:= \left[ \hat{f}(\rvx)_h-\hat{s}_h, \hat{f}(\rvx)_h+\hat{s}_h  \right], \quad h\in [1,\ldots, k]
\label{eq:cfrnn}
\end{align}

\begin{lemma}[Validity of CF-RNN]
The prediction intervals $[\Gamma^{1-\alpha}_1, \ldots, \Gamma^{1-\alpha}_k]$ constructed with Equation \ref{eq:cfrnn} satisfy the coverage guarantee of Equation \ref{eq:ts-validity}.
\end{lemma}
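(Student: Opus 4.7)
The plan is to reduce the joint multi-horizon coverage statement to $k$ single-horizon coverage statements (one per prediction step), each handled by the already-proved split conformal validity result, and then glue them back together with a union bound. Since the dataset consists of independent (hence exchangeable) time series, for any fixed horizon $h \in \{1,\ldots,k\}$, the nonconformity scores $\{s_i^{(h)}\}_{i\in \mathcal{D}_{cal}}$ together with the test-point score $s_{n+1}^{(h)}$ form an exchangeable collection. This is the key observation that lets me invoke the earlier machinery at each horizon separately.

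First I would fix $h$ and apply Theorem \ref{the:validity} with significance level $\alpha/k$ to the horizon-$h$ score function $\gS_h((\rvx,\rvy),\hat{f}) = |\rvy_h - \hat{f}(\rvx)_h|$. This yields
\[
\Prob\!\left(\rvy_{t+h} \notin \Gamma^{1-\alpha}_h(\rvx)\right) \;=\; \Prob\!\left(s_{n+1}^{(h)} > Q(1-\alpha/k,\mathbf{s}_h)\right) \;\leq\; \alpha/k,
\]
where I am using that the prediction interval in \eqref{eq:cfrnn} is exactly $\{y : |y - \hat{f}(\rvx)_h| \leq \hat{s}_h\}$, matching the generic form in Algorithm \ref{alg:cp}.

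Next I would combine the $k$ marginal bounds via the Bonferroni (union) inequality:
\[
\Prob\!\left(\exists h \in \{1,\ldots,k\} : \rvy_{t+h} \notin \Gamma^{1-\alpha}_h(\rvx)\right) \;\leq\; \sum_{h=1}^{k} \Prob\!\left(\rvy_{t+h} \notin \Gamma^{1-\alpha}_h(\rvx)\right) \;\leq\; k \cdot \frac{\alpha}{k} \;=\; \alpha.
\]
Taking complements immediately gives $\Prob(\forall h,\; \rvy_{t+h} \in \Gamma^{1-\alpha}_h(\rvx)) \geq 1-\alpha$, which is precisely \eqref{eq:ts-validity}.

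There is no real obstacle here; the proof is essentially a two-line argument. The only subtlety worth flagging is that exchangeability must hold \emph{per horizon} (which it does, because we permute whole time series rather than individual timesteps), so the single-horizon conformal guarantee applies without modification. It is also worth noting the looseness: Bonferroni is tight only when the miscoverage events across horizons are disjoint, whereas in practice consecutive horizons are highly correlated, so the true joint coverage will typically exceed $1-\alpha$ by a substantial margin—this inefficiency is the motivation for the Copula-RNN refinement promised in Table \ref{tab:guarantee}.
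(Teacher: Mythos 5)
Your proof is correct and follows essentially the same route as the paper: per-horizon miscoverage at most $\alpha/k$ via the split conformal guarantee applied at level $1-\alpha/k$, then Boole's inequality and a complement. If anything, your version is stated more carefully than the paper's (which conflates the miscoverage events with the probabilities $p_i$ in its union-bound display), and your remark that exchangeability holds because whole time series are permuted is exactly the right justification for invoking Theorem \ref{the:validity} at each horizon.
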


\begin{proof}
The proof follows directly from Boole's inequality. Let $p_i = \Prob[\: \mathbf{y}_{t+i} \notin \Gamma^{1-\alpha}_i \:] $ denote the probability of 
\begin{align*}
    \Prob[\: \exists h \in \{1,\ldots, k\}, \: \mathbf{y}_{t+h} \notin \Gamma^{1-\alpha}_h \: ] 
    = \Prob [ \:\bigcup_{i=1}^k (p_i\leq \frac{\alpha}{k}) \: ] 
    \leq \sum_{i=1}^k (\: \Prob[p_i \leq \frac{\alpha}{k} ] \:)
    = k \cdot \frac{\alpha}{k} = \alpha
\end{align*}
Take the contrapositive and we have the coverage result in Equation \ref{eq:ts-validity}.
\end{proof}

Bonferroni correction does not require any assumptions about dependence among the probabilities of individual timesteps. As a result, the predictive intervals are very conservative. If we require an interval of $95\%$ confidence over 5 timesteps, we would end up using the $99\%$ quantile of nonconformity scores for each timesteps; if it is over 20 timesteps, then the quantile for each timestep goes up to $99.75\%$, resulting in large and often uninformative predictive intervals.  

\begin{algorithm}[t!]
    \SetKwInput{Input}{Input}
    \SetKwInput{Output}{Output}
    \Input{Trained Prediction model $\hat{f}$ producing $k$-step forecasts, calibration set $\mathcal{D}_{cal} = \{ (\mathbf{x}_{1:t}^{(i)}, \mathbf{y}_{t+1:t+k}^{(i)}\}_{i=m+1}^n\}$, new input $\mathbf{x}_{1:t}^{n+1}$, target miscoverage level $\alpha$}
    \Output{Prediction regions $\Gamma^\epsilon_1, \ldots,\Gamma^\epsilon_k$.}
    \hrulefill\\
    \tcp{Calibration Step}
    Initialize conformity scores $s_1 = \{\}, \ldots, s_k = \{\} $\\
    \For{$(\mathbf{x}_{1:t}^i, \mathbf{y}_{t+1:t+k}^i) \in \mathcal{D}_{cal} $}{
        $\hat{\mathbf{y}}_{t+1:t+k}^{i} \leftarrow \hat{f}(\mathbf{x}_{1:t}^{i})$ \\
        \For{ $h=1$ \KwTo $k$}{
        $s_h \leftarrow s_h \bigcup |\hat{\mathbf{y}}_{t+h}^{i} - \mathbf{y}_{t+h}^{i}|$
        }
    }
    \For{ $h=1$ \KwTo $k$}{
     $\hat{s}_h \leftarrow  Q(1-\alpha/k, \mathbf{s}_h) $ \\
    } 
    \tcp{Prediction Step}
    $\hat{\mathbf{y}}^{n+1}_{t+1:t+k} \leftarrow \hat{f}(\mathbf{x}_{1:t}^{n+1})$ \\
    \For{ $h=1$ \KwTo $k$}{
     $\Gamma_h \leftarrow \{ y : \;  |y - \hat{y}^{n+1}_h | < \hat{s}_h \} $ \\
    } 
     \KwRet{$\Gamma^\epsilon_1, \ldots,\Gamma^\epsilon_k$}
 \caption{Conformal Forecasting RNN (CF-RNN) \cite{Schaar2021conformaltime}}
 \label{alg:cfrnn}
\end{algorithm}

\subsubsection{Copula conformal prediction for multivariate time series (our work)}

We note that there are often dependencies between timesteps in a time series, and by modeling these dependencies, we can improve  efficiency of the prediction interval. 

\paragraph{Introduction to Copulas}
We introduce the concept of copulas to leverage such dependencies for sharper prediction regions. Copula is a concept from statistics that describes the dependency structure in a multivariate distribution. We can use copulas to capture the joint distribution for multiple future timesteps. It is widely used for economic and financial forecasting \cite{nelsen2007introduction, patton2012review, remillard2012copula} and is explored in the ML multivariate time series literature \cite{salinas2019high, drouin2022tactis}. 

\begin{definition}[Copula]
Given a random vector $(X_1,\cdots X_k)$, define the marginal cumulative density function (CDF) as $F_i(x) = P(X_i \leq x)$,  the copula of $(X_1, \cdots X_k)$ is the joined CDF of $(U_1,\cdots, U_k) = (F(X_1), \cdots, F(X_k))$, meaning 
\[ C(u_1,\cdots, u_k)  = P(U_1\leq u_1, \cdots, U_k\leq u_k)\]
Alternatively
\[ C(u_1,\cdots, u_k)  = P(X_1\leq F_1^{-1}(u_1), \cdots, X_k\leq F_t^{-1}(u_k)) \]
\end{definition}

In other words, the copula function captures the dependency structure between the variable $X$s; we can view a $k$ dimensional copula $C: [0,1]^k \rightarrow [0,1]$ is a CDF with uniform marginals. A fundamental result in the theory of copula is Sklar's theorem. 

\begin{theorem}[Sklar's theorem]
Given a joined CDF  as $F(X_1,\cdots, X_k)$ and the marginals $F_i(x)$,  there exists a copula such that 
\[F(x_1,\cdots, x_k) = C(F_1(x_1), \cdots, F_k(x_k))\]
for all $x_i\in [-\infty, \infty]$ and $i =1, \cdots, k$.
\label{the:sklar}
\end{theorem}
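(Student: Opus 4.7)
The plan is to prove Sklar's theorem in two stages: first establish the result in the cleaner setting where every marginal $F_i$ is continuous, then extend to the general case where some $F_i$ may have jump discontinuities. The intuition is that if each $F_i$ is continuous, then the probability integral transform $U_i := F_i(X_i)$ produces a uniform random variable on $[0,1]$, so the joint CDF of $(U_1,\ldots,U_k)$ is automatically a copula (a CDF on $[0,1]^k$ with uniform marginals), and it will decompose $F$ in exactly the required way.

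In the continuous case, the key steps are: (i) verify via the probability integral transform that each $U_i = F_i(X_i)$ satisfies $\Prob(U_i \leq u) = u$ for $u \in [0,1]$, so $U_i$ is uniform; (ii) define the candidate copula $C(u_1,\ldots,u_k) := \Prob(U_1 \leq u_1, \ldots, U_k \leq u_k)$, which is a valid $k$-dimensional CDF with uniform marginals by construction; and (iii) substitute $u_i = F_i(x_i)$ and use continuity of $F_i$ (so that $F_i^{-1}$ is well-defined on the range of $F_i$) to obtain
\[
C(F_1(x_1),\ldots,F_k(x_k)) = \Prob\bigl(F_1(X_1) \leq F_1(x_1), \ldots, F_k(X_k) \leq F_k(x_k)\bigr) = \Prob(X_1 \leq x_1,\ldots,X_k \leq x_k) = F(x_1,\ldots,x_k),
\]
where the middle equality uses that $F_i$ is strictly increasing on the support of $X_i$, or is handled by a limiting argument on level sets where $F_i$ is flat.

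For the general case, I would replace the classical inverse by the generalized inverse (quantile function) $F_i^{-1}(u) := \inf\{x : F_i(x) \geq u\}$, and define $C$ on the product of the ranges $\mathrm{Ran}(F_1)\times\cdots\times\mathrm{Ran}(F_k)$ via $C(F_1(x_1),\ldots,F_k(x_k)) := F(x_1,\ldots,x_k)$. One then checks this is well-defined (the value of $F$ depends only on $F_i(x_i)$, not on the particular $x_i$ chosen) and is $k$-increasing with uniform marginals on its domain. The main obstacle is extending $C$ from the (possibly non-dense) set $\mathrm{Ran}(F_1)\times\cdots\times\mathrm{Ran}(F_k)$ to a proper copula on all of $[0,1]^k$; the standard way is multilinear interpolation across the gaps left by jumps of the $F_i$, combined with a careful verification that the resulting extension preserves the $k$-increasing property. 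Since this survey is applied in focus, I would likely state the continuous case in full and sketch the discontinuous extension, citing \cite{nelsen2007introduction} for the detailed construction of the extension, which is where all the real technical difficulty lies.
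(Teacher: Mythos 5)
The paper itself offers no proof of this statement: Sklar's theorem is quoted as classical background from the copula literature (cf.\ \cite{nelsen2007introduction}), so there is nothing internal to compare your argument against. Your sketch is the standard textbook route and is essentially sound. In the continuous case, defining $C$ as the joint CDF of $(F_1(X_1),\ldots,F_k(X_k))$ and substituting $u_i=F_i(x_i)$ is correct; one small refinement is that you do not need strict monotonicity of $F_i$ on the support at all --- the event $\{F_i(X_i)\leq F_i(x_i)\}\setminus\{X_i\leq x_i\}$ forces $X_i$ to land in a flat stretch of $F_i$ above $x_i$, which has probability $F_i(t)-F_i(x_i)=0$, so the middle equality holds exactly rather than "by a limiting argument." In the general case, your outline (define the subcopula on $\mathrm{Ran}(F_1)\times\cdots\times\mathrm{Ran}(F_k)$ via $C(F_1(x_1),\ldots,F_k(x_k)):=F(x_1,\ldots,x_k)$, check well-definedness, extend by multilinear interpolation) is the proof in \cite{nelsen2007introduction}; the well-definedness step is usually dispatched by the Lipschitz bound $|F(x)-F(y)|\leq\sum_i|F_i(x_i)-F_i(y_i)|$, and the $k$-increasingness of the interpolated extension is, as you say, where the real work lies, so deferring that to the reference is reasonable for a survey. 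Given that the paper treats the theorem as imported, stating the continuous case with your probability-integral-transform argument and citing the extension would be a strictly more self-contained treatment than what the paper provides.
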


Sklar's theorem states that for all multivariate distribution functions, we can find a copula function such that the distribution can be expressed using the copula and a mixture of univariate marginal distributions.  Copulas can be parametric or non-parametric; the Gaussian copula, for example, is used for pricing in financial markets \cite{Haugh2016copula}. Figure \ref{fig:copula} illustrates an example of a bivariate copula. To give an example, when all the $X_k$s are independent, the copula function is know as the product copula: $C(u_1, \cdots, u_k) = \Pi_{i=1}^k u_i$. We can bound copula functions as follows.

\begin{theorem}[The Frechet-Hoeffding Bounds]
Consider a copula $C(u_1, \ldots, u_k)$. Then,
\[
\max{\{1-k+\sum_{i=1}^k u_i, 0\} } \leq C(u_1, \ldots, u_k) \leq \min{ \{u_1, \ldots, u_k \}}
\]
\label{the:fhbounds}
\end{theorem}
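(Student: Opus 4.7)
The plan is to prove both bounds directly from the definition of a copula as a joint CDF of random variables $(U_1,\ldots,U_k)$ with uniform marginals on $[0,1]$. The whole argument is elementary probability, leveraging only monotonicity of measure and a Boole-type inequality.

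First I would prove the upper bound. By definition $C(u_1,\ldots,u_k)=\Prob(U_1\le u_1,\ldots,U_k\le u_k)$. Since the joint event $\{U_1\le u_1,\ldots,U_k\le u_k\}$ is contained in each marginal event $\{U_i\le u_i\}$, monotonicity of probability gives $C(u_1,\ldots,u_k)\le \Prob(U_i\le u_i)=u_i$ for every $i\in\{1,\ldots,k\}$. Taking the minimum over $i$ yields $C(u_1,\ldots,u_k)\le \min\{u_1,\ldots,u_k\}$.

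Next I would prove the lower bound in two pieces. Trivially $C(u_1,\ldots,u_k)\ge 0$ because it is a probability. For the other piece, I would write the complement and apply Boole's inequality (the same union-bound device already used in the CF-RNN validity proof in the excerpt):
\begin{align*}
C(u_1,\ldots,u_k) &= 1-\Prob\!\left(\bigcup_{i=1}^{k}\{U_i>u_i\}\right) \\
&\ge 1-\sum_{i=1}^{k}\Prob(U_i>u_i) \\
&= 1-\sum_{i=1}^{k}(1-u_i) = 1-k+\sum_{i=1}^{k}u_i.
\end{align*}
Combining these two lower estimates gives $C(u_1,\ldots,u_k)\ge \max\{1-k+\sum_{i=1}^{k}u_i,\,0\}$, completing the proof.

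There is really no main obstacle here, since the result is a direct consequence of the axioms of probability plus the uniform-marginal property built into the definition of a copula. The only subtlety worth a brief remark is that the lower Fr\'echet--Hoeffding bound is a copula only for $k=2$ (it fails to be $k$-increasing for $k\ge 3$), whereas the upper bound is always a copula; but neither fact is needed for the inequality itself, so I would omit this discussion from the proof.
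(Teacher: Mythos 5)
Your proof is correct, and it is complete: the paper states the Fr\'echet--Hoeffding bounds without proof (citing them as a standard fact from copula theory), so there is no in-paper argument to compare against. Your two steps are exactly the standard ones --- monotonicity of probability for the upper bound, and the complement plus Boole's inequality for the lower bound, with nonnegativity supplying the other branch of the max --- and each step is justified. One small point in your closing remark is worth keeping rather than omitting: the lower Fr\'echet--Hoeffding bound is a copula only for $k=2$ and fails to be $k$-increasing for $k\ge 3$, which in fact corrects the paper's claim that ``the Frechet--Hoeffding upper- and lower- bounds are both copulas''; as you say, this has no bearing on the validity of the inequality itself.
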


The Frechet-Hoeffding upper- and lower- bounds are both copulas. In fact, it is notable that Bonferroni correction used in CF-RNN \ref{alg:cfrnn} is precisely the lower bound in theorem \ref{the:fhbounds}.

\begin{figure}[ht!]
    \centering
    \includegraphics[width=\linewidth]{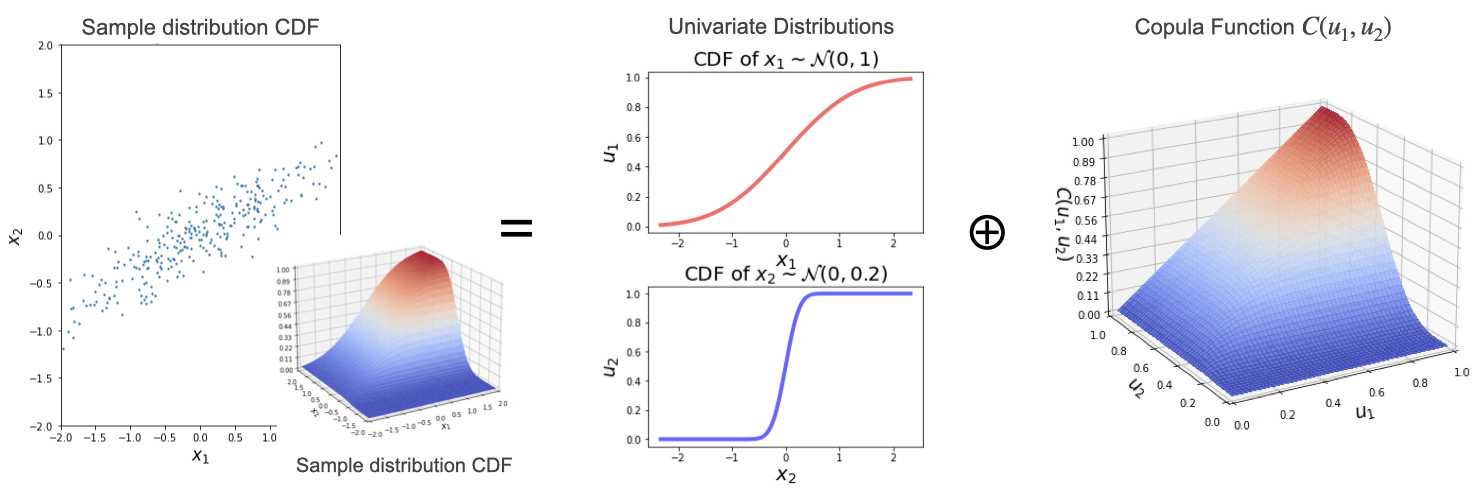}
    \caption{An example copula, where we express a multivariate Gaussian with correlation $\rho = 0.8$ with two univariate distributions and a Copula function $C(u_1, u_2)$.} 
    \label{fig:copula}
\end{figure}

\paragraph{Algorithm of Copula Conformal Prediction for Time Series}
We now introduce how we can leverage copulas in time series prediction. Firstly, recall that from results in section 3 we know given a confidence level $\alpha$ and a prediction model $\hat{f}$, the conformal procedure returns a nonconformity score $s_{1-\alpha}$ such that for a $z=(x,y) \sim \gP$
\[
	P(\gS(z, \hat{f}) \leq s_{1-\alpha} ) \geq 1-\alpha
\]

We can view the procedure as an empirical cumulative distribution function (CDF) for the random variable $\gS(z, \hat{f})$

\begin{equation}
    \hat{F}(s) := P(\gS(z, \hat{f})\leq s) = \frac{1}{|D_{cal}|} \sum_{z^i \in D_{cal} } \mathbbm{1}_{\gS(z^i, \hat{f}) \leq s} 
    \label{eq:cdf-vanila}
\end{equation}

For the multi-step conformal prediction algorithm, we will estimate this empirical CDF for each step of the time series, denoted as 

\begin{equation}
   \hat{F}_t(s_t) := P(\gS(z_t, \hat{f}) \leq s_t) \;  \text{for } t \in 1, \ldots, k
    \label{eq:cdf}
\end{equation}

For each prediction timestep $ 1,\ldots, k$, we want to find a significance level $\epsilon_t$ such that the entire predicted time series trajectory is covered by the intervals with confidence $1-\epsilon$. Following Sklar's Theorem (Theorem \ref{the:sklar}):

\begin{align*}
    F(s_1, \ldots, s_k) &= C(F_1(s_1), \ldots, F_k(s_k))\\
    &= C(1-\epsilon_1, \ldots, 1-\epsilon_k)\\
    &= 1 - \epsilon
\end{align*}

In this work, we adopt the empirical copula \cite{empiricalcop1} as our default copula. The empirical copula is a nonparametric method of estimating marginals directly from observation, and hence does not introduce any bias. For the joint distribution of a time series with $k$ timesteps, the copula of a vector of probabilities $\mathbf{u} \in [0,1]^k$ is defined as
\begin{equation}
C_{\text{empirical}} (\mathbf{u}) =  \frac{1}{|\Dcal|}\sum_{i\in \Dcal} \mathbbm{1}_{\mathbf{u}^i <\mathbf{u}} = \frac{1}{|\Dcal|}\sum_{i\in \Dcal}^{n} \prod_{t=1}^{k}  \mathbbm{1}_{\mathbf{u}^i_t <\mathbf{u}_t}
\label{eq:empiricalcopula}
\end{equation}

Here, the $\mathbf{u}_i$s are the cumulative probabilities for each data in the calibration set $\mathcal{D}_{cal}$ of size $n-m$. 
\[
     \mathbf{u}^i = (u^i_1, \ldots, u^i_t) = (\hat{F}_1(s^i_1), \ldots, \hat{F}_1(s^i_t)), \; i\in \{m+1, \ldots, n\} 
\]

\cite{messoudi2021copula} demonstrated that the empirical copula performs well for multi-target regression, flexible to various distributions. One may pick other parametric copula functions, such as the Gaussian copula, to introduce inductive bias and improve sample efficiency when calibration data is scarce. 

Now, to fulfill the validity condition of equation \ref{eq:validity}, we only need to find  
\[
 \mathbf{u^*}=(\hat{F}_1(s^*_1), \ldots, \hat{F}_t(s^*_k)) \; \text{such that } \; C_{\text{empirical}} (\mathbf{u^*}) \geq 1-\epsilon
\]

We can find the $s^*$s through any search algorithm. The prediction region for each timestep is constructed as the set of all $y_t \in \textbf{y}_t$ such that the nonconformity score is less than $s_t^*$. Algorithm \ref{alg:cpts} summarizes the CoupulaCPTS algorithm. The difference between CF-RNN (algorithm \ref{alg:cfrnn}) and CopulaCPTS (algorithm \ref{alg:cpts}) is line 10-12 of the latter - instead of calculating the quantile of nonconformity scores for each timestep individually, CopulaCPTS searches for the nonconformity scores that jointly ensures validity of the copula. 

\begin{algorithm}[t!]
    \SetKwInput{Input}{Input}
    \SetKwInput{Output}{Output}
    \Input{Trained Prediction model $\hat{f}$ producing t-step forecasts, calibration set $\mathcal{D}_{cal} = \{ (\mathbf{x}_{1:t}^{(i)}, \mathbf{y}_{t+1:t+k}^{(i)}\}_{i=m+1}^n\}$, new input $\mathbf{x}_{1:t}^{n+1}$, target significant level $\epsilon$}
    \Output{Prediction regions $\Gamma^\epsilon_1, \ldots,\Gamma^\epsilon_k$.}
    \hrulefill\\
    \tcp{Calibration Step}
    Initialize conformity scores $s_1 = \{\}, \ldots, s_k = \{\} $\\
    \For{$(\mathbf{x}_{1:t}^i, \mathbf{y}_{t+1:t+k}^i) \in \mathcal{D}_{cal} $}{
        $\hat{\mathbf{y}}_{t+1:t+k}^{i} \leftarrow \hat{f}(\mathbf{x}_{1:t}^{i})$ \\
        \For{ $h=1$ \KwTo $k$}{
        $s_h \leftarrow s_h \bigcup |\hat{\mathbf{y}}_{t+h}^{i} - \mathbf{y}_{t+h}^{i}|$
        }
    }
    Construct $\hat{F}_1 \ldots \hat{F}_k$ as equation \ref{eq:cdf}. \\
    Construct copula $C(\mathbf{u})$ as equation \ref{eq:empiricalcopula}.\\
     Search for $s^*_1, \ldots, s^*_k$  such that $C(\hat{F}_1(s^*_1), \ldots, \hat{F}_k(s^*_k)) \geq 1-\epsilon$\\
    \tcp{Prediction Step}
    $\hat{\mathbf{y}}^{n+1}_{t+1:t+k} \leftarrow \hat{f}(\mathbf{x}_{1:t}^{n+1})$ \\
    \For{ $h=1$ \KwTo $k$}{
     $\Gamma_h \leftarrow \{ y : \;  |y - \hat{y}^{n+1}_h | < s^*_h \} $ \\
    } 
     \KwRet{$\Gamma^\epsilon_1, \ldots,\Gamma^\epsilon_k$}
 \caption{Copula Conformal Prediction for Time Series (Copula CPTS)}
 \label{alg:cpts}
\end{algorithm}

\paragraph{Validity of Copula CPTS}

\begin{lemma}[Validity of Copula CPTS]
The prediction regions provided by algorithm \ref{alg:cpts} $[\Gamma^{1-\alpha}_1, \ldots, \Gamma^{1-\alpha}_k]$ satisfies the coverage guarantee $\Prob[\: \forall h \in \{1,\ldots, k\}, \: \mathbf{y}_{t+h} \in \Gamma^{\epsilon}_h\: ] \geq 1-\epsilon$.
\end{lemma}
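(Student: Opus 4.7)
The strategy is to reduce joint coverage to a componentwise inequality on the horizon-wise nonconformity vectors, then combine Sklar's theorem with an exchangeability argument analogous to Lemma \ref{lm:quantile}.

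First I would rewrite the coverage event. Let $\mathbf{s}^{(i)} = (s^i_1, \ldots, s^i_k)$ denote the vector of horizon-wise nonconformity scores for sample $i$, and let $\mathbf{s}^{(n+1)}$ be the corresponding vector for the test point. By the construction of $\Gamma^\epsilon_h$ in Algorithm \ref{alg:cpts}, the event $\{\forall h,\; \mathbf{y}^{n+1}_{t+h} \in \Gamma^\epsilon_h\}$ is equivalent to the componentwise inequality $\mathbf{s}^{(n+1)} \leq \mathbf{s}^*$ where $\mathbf{s}^* = (s^*_1, \ldots, s^*_k)$ is the threshold selected by the search. The target is then $\Prob(\mathbf{s}^{(n+1)} \leq \mathbf{s}^*) \geq 1-\epsilon$.

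Next I would invoke Sklar's theorem (Theorem \ref{the:sklar}) on the joint CDF $F$ of the test nonconformity vector to write $F(\mathbf{s}^*) = C(F_1(s^*_1), \ldots, F_k(s^*_k))$, where $C$ and $F_h$ are the true copula and marginals. Since $\hat{f}$ is trained only on $\Dtrain$, which is disjoint from both $\Dcal$ and the test point, the exchangeability assumption (Assumption \ref{as:exchange}) on the raw data transfers to exchangeability of the augmented set of vectors $\{\mathbf{s}^{(i)}\}_{i \in \Dcal \cup \{n+1\}}$. Consequently, the empirical marginals $\hat{F}_h$ and empirical copula $C_{\text{empirical}}$ are unbiased estimators of $F_h$ and $C$; moreover, for any data-independent threshold $\mathbf{s}$, the expected fraction of calibration vectors componentwise dominated by $\mathbf{s}$ equals $\Prob(\mathbf{s}^{(n+1)} \leq \mathbf{s})$.

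Finally, unrolling Equation \ref{eq:empiricalcopula}, the algorithm's guarantee $C_{\text{empirical}}(\hat{F}_1(s^*_1), \ldots, \hat{F}_k(s^*_k)) \geq 1-\epsilon$ is exactly the statement that at least a $(1-\epsilon)$ fraction of calibration vectors $\mathbf{s}^{(i)}$ are componentwise dominated by $\mathbf{s}^*$. Applying an argument in the spirit of Lemma \ref{lm:quantile}, I would augment the calibration set with the test vector (formally appending $+\infty$ in each coordinate when computing the empirical copula) and observe that the indicator $\mathbbm{1}_{\mathbf{s}^{(n+1)} \leq \mathbf{s}^*}$ has the same expectation as any $\mathbbm{1}_{\mathbf{s}^{(i)} \leq \mathbf{s}^*}$ for $i \in \Dcal$ by exchangeability. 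Combining this with the empirical lower bound yields $\Prob(\mathbf{s}^{(n+1)} \leq \mathbf{s}^*) \geq 1-\epsilon$.

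The main obstacle will be the last step, because $\mathbf{s}^*$ is chosen as a function of the calibration set and the componentwise ordering on $\mathbb{R}^k$ is only a partial order, so the standard uniform-rank trick from Lemma \ref{lm:quantile} does not transfer verbatim. The cleanest workaround is to note that re-running the search with the test vector appended would select a threshold no smaller than $\mathbf{s}^*$ (since the empirical copula is monotone in each coordinate and the level $1-\epsilon$ is a lower bound), so the event $\{\mathbf{s}^{(n+1)} \leq \mathbf{s}^*\}$ is implied by the test vector lying in the acceptance region of the augmented empirical copula; exchangeability of the augmented set then gives the bound, exactly paralleling the $\{\infty\}$-augmentation used in the split-conformal proof.
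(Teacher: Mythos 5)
Your first three steps coincide with the route the paper itself takes: rewrite joint coverage as the componentwise event $\mathbf{s}^{(n+1)}\le\mathbf{s}^*$, pass to the copula representation via Sklar's theorem (Theorem \ref{the:sklar}), and combine the search condition $C_{\text{empirical}}(\hat F_1(s^*_1),\ldots,\hat F_k(s^*_k))\ge 1-\epsilon$ with monotonicity of $\hat F_h$ and $C_{\text{empirical}}$. The paper's proof stops there; it implicitly identifies the empirical copula and marginals with their population counterparts and never attempts the finite-sample exchangeability argument you are after, so the issue you flag (that $\mathbf{s}^*$ is a function of $\Dcal$) is left implicit in the paper rather than resolved.

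The gap in your plan is the final workaround, and it is a genuine one. First, the monotonicity runs against you: appending a test vector that is \emph{not} dominated by $\mathbf{s}^*$ can only lower the augmented empirical copula at $\mathbf{s}^*$, so a re-run search would, if anything, have to enlarge the acceptance region; but to transfer the exchangeability bound $\Prob(\mathbf{s}^{(n+1)}\le\tilde{\mathbf{s}})\ge 1-\epsilon$ for the augmented threshold $\tilde{\mathbf{s}}$ back to $\mathbf{s}^*$ you need the inclusion $\{\mathbf{s}^{(n+1)}\le\tilde{\mathbf{s}}\}\subseteq\{\mathbf{s}^{(n+1)}\le\mathbf{s}^*\}$, i.e.\ $\tilde{\mathbf{s}}\le\mathbf{s}^*$ componentwise --- the opposite of what you assert, and note that in Lemma \ref{lm:quantile} the useful relation is precisely $Q(p,s_{1:n}\cup\{\infty\})\ge Q(p,s_{1:n+1})$. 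Second, the claim that the augmented search selects a threshold ``no smaller than $\mathbf{s}^*$'' is not even well posed: the constraint $C_{\text{empirical}}(\cdot)\ge 1-\epsilon$ has many pairwise-incomparable solutions in $\R^k$, Algorithm \ref{alg:cpts} does not fix a canonical, symmetric, or monotone selection rule, and under a partial order there is no analogue of the event equivalence $s_{n+1}\le Q(p,s_{1:n}\cup\{\infty\})\Longleftrightarrow s_{n+1}\le Q(p,s_{1:n+1})$ that drives the split-conformal proof. Relatedly, unbiasedness of $\hat F_h$ and $C_{\text{empirical}}$ at fixed arguments buys nothing at the data-dependent point $\mathbf{s}^*$. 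To actually close the hole you would need additional structure the plan does not supply, e.g.\ a second, independent calibration split so that $\mathbf{s}^*$ is independent of the scores entering the exchangeability argument, or a search constrained to a one-parameter symmetric family of thresholds for which an augmented-set equivalence can be proved.
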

\begin{proof}
We estimated $\mathbf{u^*} = (\hat{F}_1(s^*_1), \ldots, \hat{F}_t(s^*_k))$ such that
\begin{align}
    C_{\text{empirical}} (\mathbf{u^*}) = \displaystyle \mathop{\mathbb{E}}_u  \prod_{t=1}^{k}  \mathbbm{1}_{\mathbf{u}^*_t <\mathbf{u}_t} \geq 1-\epsilon
\end{align}
Let $(\mathbf{x}_{1:t}^{n+1}, \mathbf{y}_{t+1:t+h}^{n+1}) \sim \mathbf{X}\times \mathbf{Y}$ be a new data point. Denote $\hat{\mathbf{y}} = \hat{f}(\mathbf{x}_{1:t}^{n+1})$, the prediction given by the trained model. Because the $\hat{F}_h$ functions and $C_{\text{empirical}}$ are monotonously increasing, we have:
\begin{align*}
    P(\: \forall h \in \{1,\ldots, k\}, \: \mathbf{y}_{t+h} \in \Gamma^{\epsilon}_h\: ) 
    &= C(\hat{F}_1(s^{n+1}_1), \ldots, \hat{F}_k(s^{n+1}_k))\\
    &\geq  C( \hat{F}_1(s^*_1), \ldots, \hat{F}_k(s^*_k)) \\
    &\geq 1-\epsilon
\end{align*} 
\end{proof}

\begin{figure}[ht]
\centering
\begin{subfigure}[b]{0.33\linewidth}
  \includegraphics[width=\linewidth]{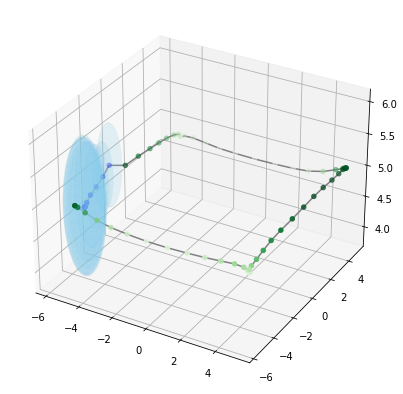}
  \caption{Copula-EncDec}
  \label{fig:cal001}
\end{subfigure}%
\begin{subfigure}[b]{0.33\linewidth}
  \includegraphics[width=\linewidth]{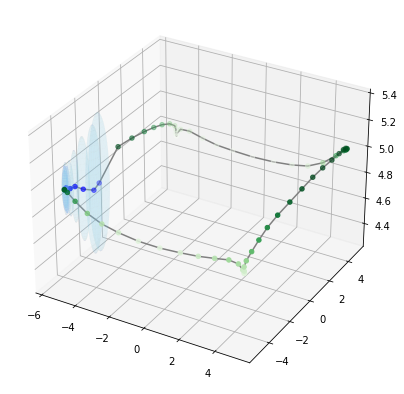}
  \caption{MC Dropout}
  \label{fig:cal002}
\end{subfigure}%
\begin{subfigure}[b]{0.33\linewidth}
  \includegraphics[width=\linewidth]{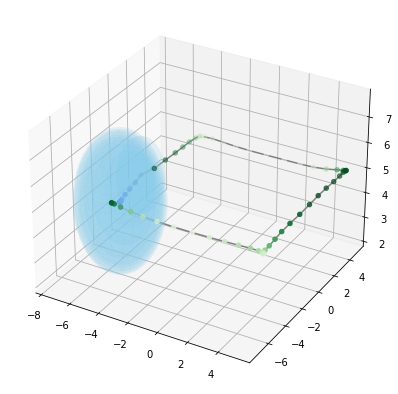}
  \caption{CF-RNN}
  \label{fig:cal003}
\end{subfigure}
\caption{Visualization of the predictions of the Copula method. The underlying models are trained on a drone simulation dataset with dynamics noise; the prediction regions have confidence level $1- \alpha = 99\%$. With copula our method (a) produces a more consistent, expanding cone of uncertainty compared to MC-dropout (b) sharper one compared to CF-RNN (c). }
\label{fig:drone}
\end{figure}


\section{Decision making under (non-parametric) uncertainty}

\subsection{Sample-efficient Safety Assurances with CP}


Many important problems involve decision making under uncertainty, including vehicle collision avoidance, wildfire management, and disaster response. When designing automated systems for making or recommending decisions, it is important to account for uncertainty inherent in the system or learned machine learning predictions. There is rich literature on incorporating UQ in a decision making framework, exemplary works include utility theory \cite{fishburn1968utility}, risk-based decision making \cite{lounis2016risk}, robust optimization \cite{ben2009robust}, and reinforcement learning \cite{chua2018deep}. We will explore methods that leverage conformal prediction to achieve provably safe decision making. 

The properties of conformal prediction, namely (1) agnostic to prediction model (2) agnostic to underlying data distribution and (3) finite sample guarantees, makes it a particularly useful technique for safety assurance. Existing work on probabilistic safety guarantees most commonly adopt a Bayesian framework \cite{dhiman2021control, fisac2018probabilistically}, but many recent work has explored conformal methods for better calibrated UQ of learned dynamics. \cite{sun2020learning}, for example, uses conformal prediction to give probabilistic guarantees on the convergence of the tracking error. \cite{farid2022failure} showed that conformal methods achieve much better sample efficiency than PAC-Bayes guarantees in the case of predicting failures in vision-based robot control. \cite{johnstone2021conformal} explored use of conformal prediction sets for robust optimization and found that they are both more calibrated and more efficient than uncertainty sets based on normality. \cite{eliades2019applying} leverage the confidence provided by CP to make controls to an exoskeleton more robust.  

\begin{definition}[$\epsilon$-safety]
Let $Z$ denote the state an agent and $\phi$ a safety score with threshold $\phi_0$. For some $0<\epsilon<1$, we say the agent is $\epsilon$-safe with respect to $Z,\phi$ and $\phi_0$ if
\[
\Prob[\phi(Z) \leq \phi_0] \geq 1-\epsilon
\]
\end{definition} 

One can make guarantee about the false positive rate for a warning system: \cite{luo2021sample} introduced the paradigm of achieving $\epsilon$-safety of warning system with regard to a surrogate safety score $\phi$ (\ref{def:epsafety}). The authors deployed conformal prediction in experiments of a driver alert safety system and a robotic grasping system, showing that the conformal guarantees hold in practice, issuing very little ($<1\%$) false positive alerts

\begin{definition}[$\epsilon$-safety of warning systems]
Let $Z$ denote the state an agent and $\phi$ a safety score with threshold $f_0$. For some $0<\epsilon<1$, we say a warning system $w$ is $\epsilon$-safe with respect to $Z,\phi$ and $\phi_0$ if
\[
\Prob[w(Z) =1 \;| \; \phi(Z) \leq \phi_0] \geq 1-\epsilon
\]
\label{def:epsafety}
\end{definition}

We briefly note on the difference between safety guarantees provided in the framework of PAC learning and by conformal prediction (see \cite{luo2021sample} for algorithmic and mathematical). PAC learning assumes i.i.d. distribution of data, and requires $\Theta(1/\epsilon^2)$ samples to achieve an \textit{i.i.d.} $\epsilon$-safety guarantee with probability $1-\delta$; conformal prediction, on the other hand, only assumes that data are exchangeable, and requires $\Theta(1/\epsilon)$ samples to achieve a \textit{marginal} $\epsilon$-safety guarantee that always holds. We will elaborated on these difference between the guarantees in Section 4.2. To summarize, their use cases differ in that: conformal learning requires much weaker assumptions and fewer samples, whereas PAC learning offers stronger guarantees when its assumptions and sample complexity requirements are met.


\subsection{The Limits of Distribution-free UQ: Marginal vs Conditional Coverage}

An important distinction to be made between marginal coverage guarantees and conditional coverage \cite{vovk2012conditional} guarantees. So far we have been focused on the \textit{marginal coverage} guarantee formalized in Equation \ref{eq:validity}:
\begin{equation}
    \Prob_{(X,Y) \sim \gP} (Y \in \Gamma^{1-\alpha}(X)) \geq 1-\alpha
\end{equation}
This means that the probability that $ \Gamma^{1-\alpha}(X)$ covers the true test value $Y$ is at least $1-\alpha$, on average, over a random draw of the training and test data from the distribution $(X,Y) \sim \gP$. In other words, the marginal coverage guarantee is a guarantee on the \textit{average} case. This is to be distinguished with \textit{conditional coverage}, where
\begin{equation}
    \Prob_{(X,Y) \sim \gP} (Y \in \Gamma^{1-\alpha}(X)|X=x) \geq 1-\alpha
\end{equation}
This on the other hand means that the probability of $\Gamma^{1-\alpha}$ covering at a \textit{fixed} test point $X=x$ is at least $1-\alpha$. An illustration of the difference can be found in figure \ref{fig:cond}. In practical settings, marginal coverage is often not sufficient, as it leaves open the possibility that entire regions of test points are receiving inaccurate predictions - a robot would reasonably hope that the predictions they receive is accurate for its specific circumstances, and would not be comforted by knowing that the inaccurate information they might be receiving will be balanced out by some other robots’ highly precise prediction \cite{romano2019malice}. Conditionally valid predictions are also tied to some definitions of algorithmic fairness \cite{bastani2022practical}.

\begin{figure}
    \centering
    \includegraphics[width=0.8\linewidth]{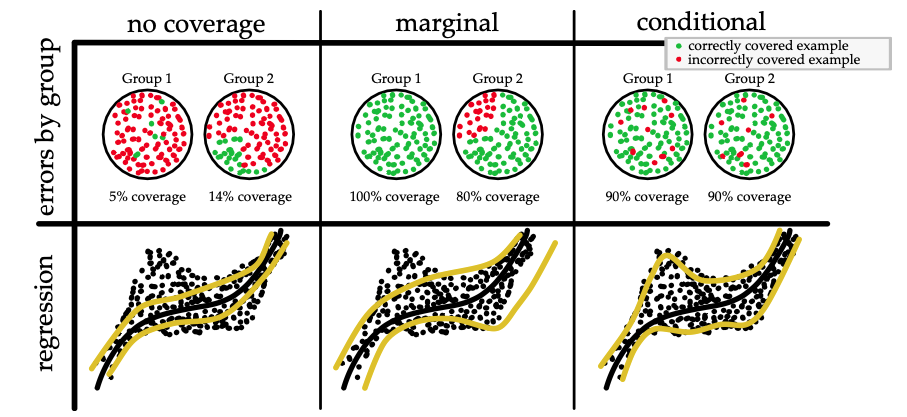}
    \caption{An illustration of the difference between marginal and conditional coverage from \protect\cite{angelopoulos2021gentle}.}
    \label{fig:cond}
\end{figure}

Conditional coverage is a stronger property than the marginal coverage that conformal prediction is guaranteed to achieve. It is proven in \cite{vovk2012conditional,lei2014distribution} that, if we do not place any assumptions on $\gP$, exact conditional coverage guarantee is impossible to achieve. We use $\gP_X$ to denote the marginal distribution of $\gP$ on $X$, and the function $Leb(\cdot)$ to denote the Lebesgue measure in $\R^d$ where $d\geq 1$. Theorem \ref{the:impossible} presents that for almost all nonatomic points $x \sim \gP_X$, a conditionally valid prediction interval has infinite expected length. Intuitively, this means the coverage guarantees of conformal prediction are only applicable for the \textit{average case} and not for the \textit{worst case}, and that there may exist subspaces of the sample space $\mathbf{Z}$ where the prediction regions have poor coverage.  

\begin{theorem}[Impossibility of finite sample conditional validity \cite{lei2014distribution}]
Let $ \Gamma^{1-\alpha}$ be a function such that 
\[
 \Prob_{(X,Y) \sim \gP} (Y \in \Gamma^{1-\alpha}(X)|X=x) \geq 1-\alpha
\]
Then for all distributions $\gP$ it holds that 
\[
\E \left[ Leb(\Gamma^{1-\alpha}(x)) \right] = \infty
\]
at almost all points x asside from the atoms of $\gP_X$.
\label{the:impossible}
\end{theorem}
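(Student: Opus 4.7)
The plan is a proof by contradiction built on a perturbation argument. The guiding intuition is that a nonatomic point $x_0$ is never actually observed in the training sample, so the procedure $\Gamma^{1-\alpha}$ has no statistical information about the conditional law of $Y\mid X=x_0$; yet distribution-free conditional validity must hold even when that conditional law is adversarially chosen to be extremely spread out. Concretely, I would fix a point $x_0$ with $\gP_X(\{x_0\})=0$ and suppose for contradiction that $M := \E_\gP[Leb(\Gamma^{1-\alpha}(x_0))] < \infty$. I would then build an alternative distribution $\gP'$ whose training samples are statistically indistinguishable from those of $\gP$ with high probability, but under which the conditional distribution at $x_0$ is uniform on a huge interval, forcing $\Gamma^{1-\alpha}(x_0)$ to have large expected Lebesgue measure.

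For parameters $\lambda\in(0,1)$ and $N>0$ to be tuned at the end, I would take $\gP' = (1-\lambda)\gP + \lambda\mu_N$, where $\mu_N$ places $X=x_0$ deterministically and $Y\sim \mathrm{Uniform}[-N,N]$. The assumed conditional validity under $\gP'$ at $x_0$ reads
\[
\Prob_{\gP'}\bigl(Y\in \Gamma^{1-\alpha}(x_0)\mid X=x_0\bigr) \;=\; \frac{1}{2N}\,\E_{\gP'}\!\left[Leb\bigl(\Gamma^{1-\alpha}(x_0)\cap [-N,N]\bigr)\right] \;\geq\; 1-\alpha,
\]
so $\E_{\gP'}[Leb(\Gamma^{1-\alpha}(x_0)\cap[-N,N])] \geq 2N(1-\alpha)$. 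To relate this bound to $M$, I would exploit the mixture structure of $\gP'$: each of the $n$ training draws independently picks the $\gP$ component with probability $1-\lambda$ or the $\mu_N$ component with probability $\lambda$. Letting $A$ be the event that every draw picks $\gP$ gives $\Prob_{\gP'}(A)=(1-\lambda)^n=:p$; conditional on $A$ the training sample is exactly iid from $\gP$, so $\E_{\gP'}[Leb(\Gamma^{1-\alpha}(x_0)\cap[-N,N])\mid A] \leq M$. Combining this with the trivial bound $Leb(\,\cdot\,\cap[-N,N])\leq 2N$ on $A^c$ and the lower bound above yields
\[
pM + (1-p)(2N) \;\geq\; 2N(1-\alpha), \qquad\text{i.e.,}\qquad M \;\geq\; 2N\bigl(1-\tfrac{\alpha}{p}\bigr).
\]
Finally I would pick any fixed $\lambda$ small enough that $(1-\lambda)^n > \alpha$ (any $\lambda < 1-\alpha^{1/n}$ works for the fixed sample size $n$), making $1-\alpha/p$ a strictly positive constant depending only on $n$ and $\alpha$, and then send $N\to\infty$ to force $M=\infty$, the desired contradiction.

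The hard part will be the coupling step: justifying that conditioning on $A$ really does leave the non-contaminated draws iid from $\gP$. I would handle this cleanly by realizing each $\gP'$-sample as a pair $(B_i,Z_i)$ with $B_i\sim\mathrm{Bernoulli}(\lambda)$ selecting the component and $Z_i$ drawn from the chosen component, since the $B_i$'s are then independent of the actual draws. A secondary point is randomized prediction procedures, which I would absorb by augmenting the probability space with an independent randomization seed untouched by the coupling. The nonatomicity assumption on $x_0$ is essential here: at an atom, the training sample observes $x_0$ with positive probability, so a perturbation of the conditional law there cannot be rendered statistically invisible and the coupling inequality breaks down, which matches exactly the caveat in the theorem that the claim only applies outside the atoms of $\gP_X$.
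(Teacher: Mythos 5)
Your argument is correct; note, though, that the survey states this theorem without proof, citing Lei and Wasserman (2014), so the relevant comparison is with the literature proof rather than with an in-paper one. Your contamination construction --- mixing a point mass at the non-atom $x_0$ with $Y \sim \mathrm{Uniform}[-N,N]$ into $\gP$, using the Bernoulli-selector coupling to condition on the event $A$ that no training draw comes from the contaminating component, and then sending $N \to \infty$ --- is sound: nonatomicity of $x_0$ under $\gP_X$ ensures the conditional law of $Y$ given $X=x_0$ under $\gP'$ is exactly the uniform one, $\E_{\gP'}[\,\cdot \mid A\,] \le M$ follows from the coupling, and the inequality $pM + (1-p)2N \ge 2N(1-\alpha)$ with $p=(1-\lambda)^n > \alpha$ indeed forces $M = \infty$. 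You also correctly read the (loosely stated) hypothesis as \emph{distribution-free} conditional validity, i.e.\ coverage for every distribution including your $\gP'$; under the weaker reading in which coverage is assumed only for the fixed $\gP$ the theorem is false (an oracle conditional quantile band is a counterexample), and your proof essentially uses coverage under $\gP'$, whose marginal $\gP'_X$ has an atom at $x_0$, so the coverage requirement there is unambiguous and automatically in force.

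The standard proof (Lei--Wasserman, later sharpened by Barber et al.) differs in one substantive way: it perturbs $\gP$ on a small ball around $x_0$ of vanishing $\gP_X$-mass and controls the discrepancy between the $n$-fold product measures by total variation, rather than mixing in a point mass and coupling exactly. That version keeps the perturbed distributions atomless --- so the impossibility result survives even if validity is only demanded over, say, continuous distributions --- and the limiting ball argument is where the ``almost all $x$'' qualifier in the statement originates. Your point-mass variant is more elementary and yields the conclusion at \emph{every} non-atom, at the price of invoking the validity hypothesis for distributions that place an atom at the test point, which the theorem as stated (``for all distributions $\gP$'') does permit.
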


This is known as the "limits of distribution-free conditional predictive inference" \cite{foygel2021limits}. The negative result has motivated the community to consider approximate versions of the conditional coverage property. \cite{romano2019malice} provides group conditional guarantees for disjoint groups by calibrating separately on each group. \cite{foygel2021limits} provides guarantees that are valid conditional on
membership in intersecting subgroups $\gG$. \cite{bastani2022practical} proposes an algorithm that provides a convergence rate lower bound on conditional coverage on for arbitrary subsets of the input feature space — possibly intersecting, conditional on membership in each of these subsets. They relax the exchangability assumption by considering the adversarial setting, where the order of $\mathcal{D}$ is chosen by an adversary, and there by proving guarantees for the \textit{worst} case. We refer readers to these works for more detailed exposition of their algorithm and theory.

\section{Conclusion and Future Work}

This survey introduces conformal algorithms for uncertainty quantification in the spatiotemporal setting.
Section 2 provides an overview of existing uncertainty quantification methods; section 3 presents the theory of conformal prediction, and studied in detail a few conformal prediction algorithms developed for the time series setting, looking respectively in the case of (1) data generated from one time series with distribution shift and (2) multiple exchangeable time series. In section 4, we explored how conformal methods are used in decision making and where the limits may lie. 

We highlight the merit of conformal prediction algorithms - they construct predictions sets that have finite-sample validity guarantees that hold for any prediction algorithm and underlying data distribution. Moreover, as we have introduced in this survey, the prediction sets are able to be re-calibrated in an online fashion in the presence of distribution shifts. Many opportunities of future work are present on both the algorithm and application front, as we will elaborate below.

In section 3.3.2 we present our work on improving efficiency of the prediction sets in the independent time-series setting, motivated by works in probabilistic modeling of vehicle trajectories. Future work includes developing an online re-calibration procedure for the copula and proving its effectiveness. This setting can be further extended to producing joint prediction sets for multiple targets or agents where their relationship evolves over time, for example in multi-agent trajectory prediction.

Developing methods to leverage distribution-free UQ technique for automated decision making is also an important future direction.  Conformal prediction has found extensive use in medical research and personal medicine \cite{izbicki2019flexible, Schaar2021conformaltime,teng2021t, lei2012distribution, eklund2015application}, where the scientists and physicians can use the calibrated intervals to understand model outputs and make better decisions. On the other hand, using CP for automated decision making remains a large space for exploration. For example, in a multi-agent planning setting, one can use conformal prediction sets to produce trajectories with similar safety guarantees as the $\epsilon$-safety introduced in section 4. The online adaptive time series methods introduced in this survey will also be meaningful in the safety setting.

\bibliographystyle{abbrv}
\bibliography{ref.bib}

\end{document}